\newtheorem{definition}{Definition}
\newtheorem{theorem}{Theorem}[section]
\newtheorem{lemma}[theorem]{Lemma}
\numberwithin{equation}{section}  
\title{\centering{Towards Improving Reward Design in RL: \\ A Reward Alignment Metric for RL Practitioners}}
\author{Calarina Muslimani\textsuperscript{1, $\dagger$}, Kerrick Johnstonbaugh\textsuperscript{2}, Suyog Chandramouli\textsuperscript{3}, \\ Serena Booth\textsuperscript{4, 5},  W. Bradley Knox\textsuperscript{6}, Matthew E. Taylor\textsuperscript{1,7}}
\keywords{Reinforcement Learning, Reward Design, Alignment, Human-AI Interaction, Human-in-the-loop} 
\begin{document}

\makeCover  
\maketitle  

\begin{abstract}
Reinforcement learning agents are fundamentally limited by the quality of the reward functions they learn from, yet reward design is often overlooked under the assumption that a well-defined reward is readily available. 
However, in practice, designing rewards is difficult, and even when specified, evaluating their correctness is equally problematic: \emph{how do we know if a reward function is correctly specified?} 
 In our work, we address these challenges by focusing on \textit{reward alignment} --- assessing whether a reward function accurately encodes the preferences of a human stakeholder.
As a concrete measure of reward alignment, we introduce the Trajectory Alignment Coefficient to quantify the similarity between a human stakeholder's ranking of trajectory distributions and those induced by a given reward function.
We show that the Trajectory Alignment Coefficient exhibits desirable properties, such as not requiring access to a ground truth reward, invariance to potential-based reward shaping, and applicability to online RL.
Additionally, in an $11$--person user study of RL practitioners, we found that access to the Trajectory Alignment Coefficient during reward selection led to statistically significant improvements. Compared to relying only on reward functions, our metric reduced cognitive workload by $1.5$x, was preferred by $82\%$ of users and increased the success rate of selecting reward functions that produced performant policies by $41\%$.
\end{abstract}
\section{Introduction}\label{sec:intro}

In reinforcement learning (RL), the \textit{reward hypothesis} states that ``all of what we mean by goals and purposes can be well thought of as maximization of the expected value of the cumulative sum of a received scalar signal (reward)'' \citep{sutton2018reinforcement}. 
More generally, this means that RL agents can solve a task provided the reward function properly defines the task's objective. 
However, the reward hypothesis does not address the practical challenges of designing reward functions. 
In practice, reward design is often a difficult and error-prone process carried out by human engineers \citep{defining_reward_gaming, perils_reward_design, specifying_rl_objectives}.  

These challenges can become more pronounced in real-world RL applications, where reward design is typically a \textit{collaborative process} between RL practitioner(s) and domain expert(s). 
While the domain expert has specialized knowledge of the task, they typically lack RL expertise, making it difficult for them to define a reward function explicitly. Instead, the domain expert might express preferences, constraints, or desired outcomes, leaving the RL practitioner responsible for designing (or selecting) a reward function that satisfies these preferences. This collaboration can increase the complexity of crafting reward functions that correctly specify objectives. 

Sparse reward functions are conceptually simple to understand and implement but are less commonly used in practice since current RL algorithms struggle to learn from infrequent signals \citep{pignatelli2024surveytemporalcreditassignment}.
To overcome this, dense reward functions are employed, which provide more frequent feedback to help mitigate the credit assignment problem.
However, reward misspecification remains a challenge \citep{amodei2016concreteproblemsaisafety,defining_reward_gaming}. For example, a recent survey found that reward shaping, a method intended to facilitate learning, is commonly used in RL applications for autonomous driving \citep{reward_misdesign_AD}; without careful design, reward shaping can introduce unintended biases. This can result in RL agents exploiting shortcuts in the reward function or failing to achieve the 
``true'' task objective \citep{pan2022effectsrewardmisspecificationmapping}.
Such issues can pose serious safety risks in real-world applications like autonomous driving and industrial process control.

\emph{Reward evaluation} is also challenging. This is the process of assessing whether a reward function accurately captures the intended task.
A common approach is the ``rollout'' method, where a policy is trained to optimize the reward function, and then its rollouts are examined to assess the learned behavior \citep{perils_reward_design, epic}. 
However, this approach has several limitations: (1) it is computationally expensive, (2) can result in reward overfitting---where reward functions become unintentionally over-engineered for a specific algorithm or environment configuration---and (3) assumes that policies are evaluated outside of training, making it less applicable to the online RL setting.
Alternatively, prior works~\citep{epic, dard} have proposed distance metrics for reward evaluation but these require a ground-truth reward for baseline comparison, limiting their integration into the reward design pipeline (unless shaping a reward based on an existing function). 
Moreover, other metrics~\citep{specifying_rl_objectives,value_alignment} focus solely on alignment verification and do not measure partial alignment.


In this work, we focus on \emph{reward alignment} as a means of reward evaluation, which we define as the extent to which a reward function preserves human preferences.
To operationalize this concept, we introduce the \textit{Trajectory Alignment Coefficient} ($\sigma_{\textit{TAC}}$). This metric evaluates the similarity between a human stakeholder's preferences over trajectory distributions (of which trajectories are a special case) and those induced by a given reward, discount factor pair. 
It overcomes key limitations of previous work by eliminating the need for a ground-truth reward, instead relying on human preferences. 
 Unlike alignment verification, the Trajectory Alignment Coefficient measures the \textit{degree} of reward alignment, allowing it to distinguish between reward functions that yield the same optimal policy but rank intermediate trajectory distributions differently---making it suitable for online RL.
 
Additionally, we prove the necessary and sufficient conditions for the Trajectory Alignment Coefficient
to be invariant to common transformations, in particular potential-based shaping and positive linear rescaling. This invariance is important because sensitivity to these transformations can cause functionally equivalent rewards to receive different scores, leading to unreliable assessments.
Beyond reward alignment, the Trajectory Alignment Coefficient can serve as a distance metric for comparing reward functions (and their associated discounting). While our primary focus is on reward design with human preferences as the reference, this perspective highlights its potential as a tool for comparing reward functions more broadly.


Lastly, we assess whether the Trajectory Alignment Coefficient can aid RL practitioners in the reward design process (see Figure~\ref{fig:domain_expert_rl_expert_interaction}). Specifically, we investigate its benefit in reward selection---i.e., choosing performant reward functions that capture a domain expert’s preferences. 
To evaluate this, we conducted an $11$--person user study in the Hungry-Thirsty domain \citep{singh2009rewards}, a test-bed where RL practitioners have struggled to design well-specified rewards \citep{perils_reward_design}. 
Our statistically significant findings show that access to the Trajectory Alignment Coefficient during reward selection (1) reduced perceived cognitive workload by \( 1.5x \), (2) was preferred by \( 82\% \) of users over the Reward Only condition, and (3) increased the success rate of selecting reward functions that produced performant policies by \( 41\% \).
Ultimately, our work takes a step toward improving reward design and selection in RL by introducing a metric that measures the alignment between a proposed reward function and a set of human preferences over sampled trajectory distributions.

\begin{figure}[h!]
  \centering

  \hfill
  {\includegraphics[width=\textwidth, trim = 0cm 7cm 9cm 8cm, clip]{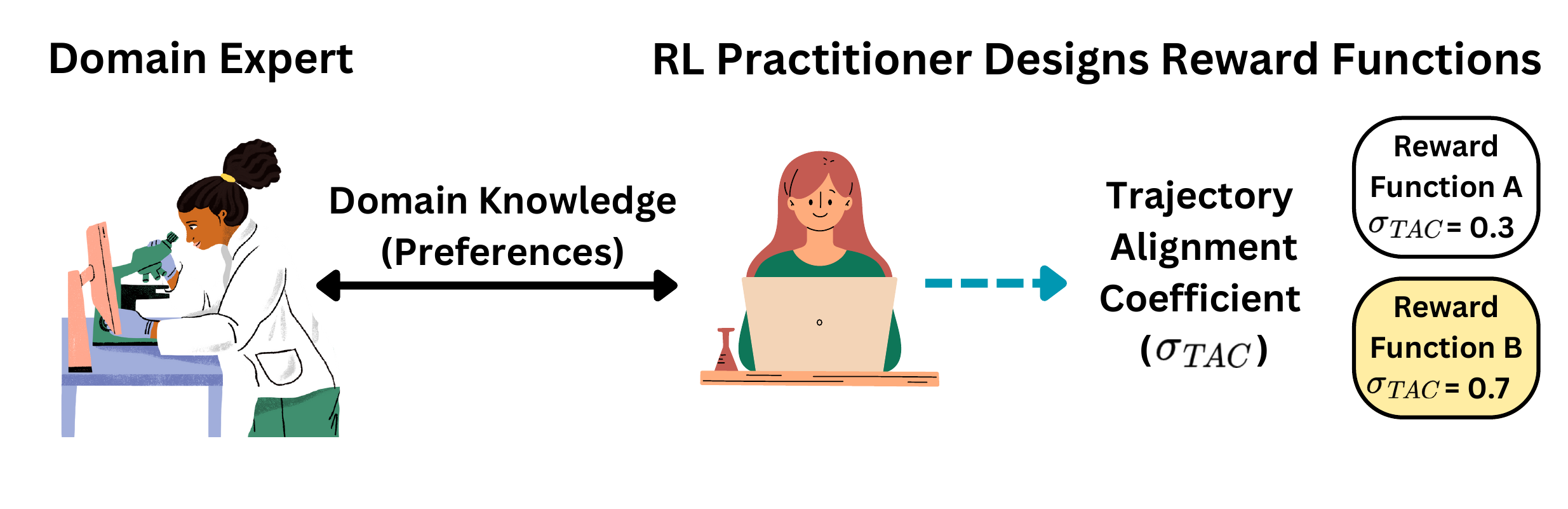}}

  \caption{This illustrates the interaction between a domain expert and an RL practitioner in real-world applications (black arrow) and how our metric, $\sigma_{\textit{TAC}}$, integrates into this process (blue arrow). 
  }
  
  \label{fig:domain_expert_rl_expert_interaction}
\end{figure}

\section{Related Work}\label{sec:related_work}
Early alignment research focused on directly training agents to align with human preferences \citep{hadfield2016cooperative}. However, these approaches did not include an assessment of the agent’s alignment.  
Recent efforts have shifted to evaluating the quality of engineered and learned reward functions. For example,~\cite{perils_reward_design,reward_misdesign_AD} have conducted empirical investigations to identify shortcomings in current reward design practices and evaluation schemes.
Furthermore, metrics have been proposed to compare reward functions without requiring policy evaluations (see Table~\ref{tab:distance_measures}). While some methods are invariant to potential-based shaping, they can rely on access to a ground-truth reward function \citep{dard, epic}, which is often impractical. 
Likewise, \citet{value_alignment} proposed verification methods to assess the alignment of an RL agent’s behavior but define alignment in terms of optimal policies. 
We argue, however, that defining alignment in this manner can be limiting, particularly in online RL where one cares about the agent's lifetime performance. 
Our work builds on \citet{specifying_rl_objectives}, which described methods to identify misalignment and outlined its common causes. However, this prior work focused on detecting whether misalignment exists, offering only a binary assessment. We extend this framework by introducing a real-valued metric that quantifies the degree of alignment, enabling a more nuanced evaluation of reward function quality.
 Lastly, while LLM alignment is also a prominent topic~\citep{shen2023largelanguagemodelalignment}, it is beyond the scope of this work due to its broad focus, which can include mitigating adversarial attacks and detecting bias. 

\begin{table}[h!]
\centering
\resizebox{\textwidth}{!}{%
\begin{tabular}{l c c c c c}  
\toprule
\textbf{\textsc{Metric}} & \textbf{Invariant} & \textbf{No GT \(r\)} & \textbf{Not} & \textbf{No Human} & \textbf{Suitable for} \\  
 &  & \textbf{Required} & \textbf{Binary} & \textbf{Preferences} & \textbf{Online RL} \\  
\midrule
\textsc{\cite{epic}} & \checkmark & \(\times\) & \checkmark & (\checkmark) & \checkmark \\
\textsc{\cite{dard}} & \checkmark & \(\times\) & \checkmark & (\checkmark) & \checkmark \\
\textsc{\cite{value_alignment}} & -- & \(\times\) & \(\times\) & (\checkmark) & \(\times\) \\
\textsc{\cite{specifying_rl_objectives}} & \checkmark & \checkmark & \(\times\) & \(\times\) &  \(\times\)   \\
\textsc{Trajectory Alignment Coefficient} & \checkmark & \checkmark & \checkmark & \(\times\) & \checkmark \\
\bottomrule
\end{tabular}
}
\caption{Comparison of reward evaluation measures. \(\checkmark\) indicates the metric satisfies the property, \(\times\) indicates it does not, (\checkmark) indicates partial satisfaction, and -- indicates the property was not evaluated.}
\label{tab:distance_measures}
\end{table}

\section{Background}\label{sec:background}
This section first provides background on RL, then discusses how a reward function and discount factor pair, ($r, \gamma$), induce preference orderings over trajectories (and trajectory distributions), a concept rooted in prior work on policy preferences \citep{bowling2023settling}.

\begin{definition}
A Markov decision process (MDP) is defined by the tuple 
$(\mathcal{S}, \mathcal{A}, r, p, \mu, \gamma)$, where $\mathcal{S}$ is the state space, $\mathcal{A}$ is the action space, $r: \mathcal{S} \times \mathcal{A} \times \mathcal{S} \rightarrow \mathbb{R}$ is the reward function and $p: \mathcal{S} \times \mathcal{A} \times \mathcal{S} \rightarrow [0,1]$ is the state transition function. The initial state distribution is given by $\mu$, and $\gamma \in [0,1)$ is the discount factor that controls the weighting of future rewards. 
\end{definition}

In RL, at every time-step $t$, the agent takes an action $a_t$ in state $s_t$, transitions to state $s_{t+1}$, and then receives reward $r_{t+1}$. 
A trajectory \(\tau\) is a sequence of ($s_t, a_t, s_{t+1}$) tuples
that either reaches a terminal state after a finite number of steps or continues indefinitely.
The return of a trajectory is defined as the sum of discounted future rewards,
$G_{r}(\tau) = \sum_{t=0}^{T} \gamma^t r_{t+1}$,
where \( T = |\tau|-1 \) for episodic tasks or \( T \to \infty \) for continuing tasks.
The agent attempts to learn a policy, $\pi:\mathcal{S} \times \mathcal{A} \rightarrow [0,1]$ to maximize the expected return.

\paragraph{Reward Functions Induce Preference Orderings}\label{sec:reward_funcs_induce_preferences}
Consider the deterministic case where we define preferences over a set of trajectories that share the same start state.
In this case, given (\(r, \gamma\)) a preferred trajectory is one that yields a greater return:
\begin{equation}\label{eq:determinstic_traj}
\tau_A \underset{(r, \gamma)}{\succsim} \tau_B \iff G_r(\tau_\text{A}) \ge  G_r(\tau_\text{B})
\end{equation}
where \( \tau_A \underset{(r, \gamma)}{\succ} \tau_B \) indicates that trajectory \( \tau_A \) is preferred over \( \tau_B \) with respect to \( (r,  \gamma)\). 

We now shift to the stochastic setting, which arises when the environment or the agent's behavior is stochastic. In this case, we consider probability distributions over trajectories. 
Note, we specifically focus on trajectory distributions rather than policies, as some distributions (e.g., those that are generated from non-Markovian policies) cannot correspond to any Markov policy.

\begin{definition}\label{Trajectory_Distributions}
Let \(H(\mu)\) be the set of all probability distributions over trajectories that share the same initial state distribution \(\mu\). That is,  
\[
H(\mu) = \left\{ \eta(\tau) \mid \eta(\tau) = \mu(s_0) P(a_0, s_1, a_1, \ldots \mid s_0) \right\} ,
\]
where \(P(a_0, s_1, a_1, \ldots \mid s_0)\) is an arbitrary conditional distribution over trajectories given the initial state \(s_0\). We refer to \(\eta(\tau) \in H(\mu)\) as a trajectory distribution and omit the explicit dependence on \(\tau\) for brevity.   
\end{definition}


\begin{definition}\label{def:preferences_trajectory_distribution}
 Given $(r, \gamma)$, we define a preference ordering over trajectory distributions as follows:
\begin{equation}\label{eq:stochastic_traj}
\eta_\text{A} \underset{(r, \gamma)}{\succsim} \eta_\text{B}   \iff \mathbb{E}_{\tau_A\sim\eta_\text{A}}[G_r(\tau_A)] \ge \mathbb{E}_{\tau_B\sim\eta_\text{B}}[G_r(\tau_B)] 
\end{equation}
\end{definition}

Equations \eqref{eq:determinstic_traj} and \eqref{eq:stochastic_traj} imply that the $(r, \gamma)$ pair naturally induce a preference ordering over trajectories (or trajectory distributions) via the expected return. 
To illustrate these concepts, consider the simple autonomous driving task \citep{specifying_rl_objectives} in \mbox{Figure \ref{fig:AD_alignnment_example}}. Suppose there exists only three trajectories $\{\tau_{\text{{success}}}, \tau_{\text{{idle}}}, \tau_{\text{{crash}}}\}$, and a trajectory distribution $\eta_{\text{success-crash}}$. $\tau_{\text{{success}}}$ consists of safe driving.  $\tau_{\text{{crash}}}$ consists of a car crashing and $\tau_{\text{{idle}}}$ consists of a car remaining parked. $\eta_{\text{success-crash}}$ is a trajectory distribution that places 90\% of its probability mass on $\tau_{\text{success}}$ and 10\% on $\tau_{\text{crash}}$.
Next, consider the pair $(r, \gamma)$ with return values: $G_r(\tau_{\text{{success}}})=10$, $G(\tau_{\text{{idle}}})=0$, $G_r(\tau_{\text{{crash}}})=-50$. By the probabilities of $\eta_{\text{success-crash}}$, $\mathbb{E}_{\tau\sim\eta_{\text{success-crash}}}[G_r(\tau)]=4$. 
Based on equations \eqref{eq:determinstic_traj} and \eqref{eq:stochastic_traj}, the resulting preference ordering is 
$\tau_{\text{{success}}}  \succ \eta_{\text{success-crash}} \succ \tau_{\text{{idle}}} \succ \tau_{\text{{crash}}}$.
\begin{figure}[h!]
  \centering

  \hfill
  {\includegraphics[width=\textwidth, trim = 0cm 2cm 0cm 4cm, clip]{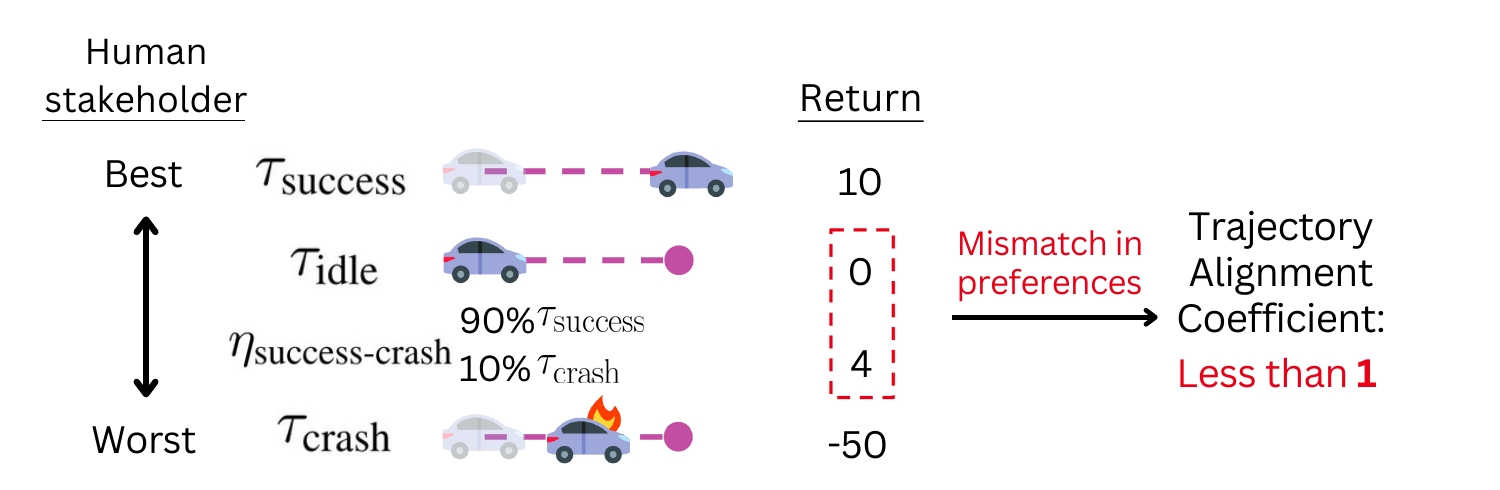}}

  \caption{This provides an example of the Trajectory Alignment Coefficient in the case of a simple autonomous driving scenario.}
  
  \label{fig:AD_alignnment_example}
\end{figure}

\section{An Alignment Metric for Reward Function Evaluation}\label{sec:alignment_method}

This section introduces the Trajectory Alignment Coefficient as a reward alignment metric and establishes its key theoretical properties.

\subsection{Trajectory Alignment Coefficient}
To establish a reward alignment metric, we need to quantify how well a reward function reflects the preferences of a human stakeholder. To achieve this, we propose the \emph{Trajectory Alignment Coefficient}, a measure based on Kendall's Tau--b correlation.
Kendall's Tau--b is a non-parametric measure that quantifies the level of agreement between two sets of ranked data, adjusting for ties \citep{kendall_tau_b_variant}.
It outputs a scalar value $\in [-1, 1]$, indicating levels of agreement: $1$ for perfect agreement (e.g., identical preference orderings) and $-1$ for complete disagreement (e.g., reverse preference orderings).
 The Trajectory Alignment Coefficient measures the similarity among preference orderings over trajectory distributions in \( H(\mu) \). However, \( H(\mu) \) can theoretically contain an intractably large number of trajectory distributions. To apply the Trajectory Alignment Coefficient as a practical reward alignment measure, we must consider finite subsets of trajectory distributions from $H(\mu)$. 

To compute $\sigma_{\textit{TAC}}$, we first construct two preference data sets, one from a human \( (D_h) \), assumed to be transitive, and one induced by a given $(r, \gamma)$ pair.  
Specifically, we define:
\[
\upsilon_h(D_h) = \Bigl\{ \{\eta_i, \eta_j\} \mid (\eta_i \underset{(h)}\diamond \eta_j) \in D_h\Bigr\}
\]
where \( \diamond \in \{\succ, \prec, \sim\} \) denotes a preference relation and the subscript indicates whether the preference originates from the human or \( (r, \gamma) \).
$\upsilon$ extracts unordered pairs of trajectory distributions that were ranked by the human. Then given these pairs, we construct the corresponding preference dataset under \( (r, \gamma) \) via Definition \eqref{def:preferences_trajectory_distribution}, which ranks trajectory distributions with respect to \( (r, \gamma) \):
\[
D_{r, \gamma}(\upsilon_h, r, \gamma) = \Bigl\{ (\eta_i \underset{(r,\gamma)}\diamond \eta_j) \mid \{\eta_i, \eta_j\} \in \upsilon_h(D_h) \Bigr\}.
\]
Once we have both \( D_h \) and \( D_{r, \gamma} \), \( \sigma_{\textit{TAC}} \) measures their agreement using Kendall’s Tau-b:
\begin{equation}
   \sigma_{\textit{TAC}}(D_{\textit{h}}, D_{r, \gamma}) \doteq \frac{P - Q}{\sqrt{(P + Q + X_0)(P + Q + Y_0)}}
    \label{eq:tac}
\end{equation}
where
\begin{align*}
    P & : \text{Number of concordant pairs between } D_{r, \gamma} \text{ and } D_{\textit{h}}, \\
    Q & : \text{Number of discordant pairs between } D_{r, \gamma} \text{ and } D_{\textit{h}}, \\
    X_0 & : \text{Number of pairs tied only in } D_{r, \gamma}, \\
    Y_0 & : \text{Number of pairs tied only in } D_{\textit{h}}.
\end{align*}

This formulation ensures that \( \sigma_{\textit{TAC}} \) quantifies the alignment between human and reward-induced preferences over the same set of trajectory distribution pairs.
To illustrate, consider the trajectory distributions, $\eta_1, \eta_2, \eta_3$, where $D_{\textit{h}}, D_{r, \gamma}$ are as follows, 
\begin{equation*}
    \begin{gathered}
        D_{\textit{h}} = \Bigl\{ (\eta_2 \underset{(\textit{h})}\succ \eta_3), (\eta_1 \underset{(\textit{h})}\succ \eta_3) \Bigr\} 
        \implies \upsilon_h(D_{\textit{h}}) = \Bigl\{ \{\eta_2, \eta_3\}, \{\eta_1, \eta_3\} \Bigr\}
    \\
        D_{r, \gamma}(\upsilon_h, r, \gamma) = \Bigl\{(\eta_2 \underset{(r, \gamma)}\prec \eta_3), (\eta_1 \underset{(r, \gamma)}\succ \eta_3) \Bigr\}.
    \end{gathered}
\end{equation*}
Note that given a subset of trajectory distributions, the Trajectory Alignment Coefficient can be applied in cases with either a full or partial ranking. A full ranking establishes a complete order over the elements in a subset, where all necessary pairwise comparisons are available. In contrast, a partial ranking occurs when some pairwise comparisons are missing (e.g., $D_{r, \gamma}, D_\textit{h}$, as the comparison between \( \eta_1 \) and \( \eta_2 \) is missing).
This flexibility allows the Trajectory Alignment Coefficient to be used in settings where ranking information is limited or incomplete.

Moreover, the definition for \(\sigma_{\textit{TAC}}\) in Equation \eqref{eq:tac} can also be used to evaluate the differences between any two reward, discount factor pairs: $(r, \gamma), (r', \gamma)$. For example, given data sets \(D_{r, \gamma}\) and \(D_{r', \gamma}\) representing preferences over trajectory distributions, we can use Equation \eqref{eq:tac} to determine how similar the preferences induced by \((r, \gamma)\) are to those induced by \((r', \gamma)\).

\subsection{Invariance to Common Reward Transformations}\label{sec:PotenitalBasedRewardShaping}
Reward shaping is commonly used to accelerate RL training by modifying the reward function to improve learning efficiency \citep{potenital_based_reward_shaping}. Common reward transformations include potential-based reward shaping and positive linear rescaling. A well-designed reward evaluation metric should be invariant to these transformations; otherwise, it may assign different scores to functionally equivalent rewards, leading to inconsistent assessments. We show that the Trajectory Alignment Coefficient maintains this invariance, ensuring stable evaluations of reward alignment.

\begin{definition}\label{def:potenital_based_reward_shaping}

A potential-based reward function is defined as 
$r'(s, a, s') \doteq r(s,a,s') + \gamma \Phi(s') - \Phi(s),
$
given a potential function $\Phi: \mathcal{S} \to \mathbb{R}$, and $\gamma$ as the MDP discount factor.
\end{definition}
To determine whether the Trajectory Alignment Coefficient is invariant to potential-based reward shaping, we first examine the conditions for which preference orderings remain unchanged. While potential-based reward shaping is known to preserve the optimal policy \citep{potenital_based_reward_shaping}, we further prove in Theorem \ref{potenital_based_reward_shaping_both_cases} that, in the infinite-horizon setting, it preserves preference orderings over all trajectory distributions \( \eta \in H(\mu) \) \emph{if and only if they share the same start-state distribution, \( \mu \)}. This establishes a fundamental condition for ensuring that any reward alignment metric based on preference orderings remains unaffected by potential-based reward shaping.

\begin{lemma}\label{linear_transformations_expected_returns_preserves_preferences_main}
Given the infinite-horizon setting, if the expected returns under reward function \( r' \) are a positive linear transformation of the expected returns under reward function \( r \), with respect to all trajectory distributions, then the preference ordering over any two trajectory distributions \( \eta_i \) and \( \eta_j \) remains unchanged. Formally:
\[
\mathbb{E}_{\tau \sim \eta}[G_{r'}(\tau)] = \alpha \mathbb{E}_{\tau \sim \eta}[G_{r}(\tau)] + \beta \implies 
\Big(\eta_i \underset{(r, \gamma)}{\succsim} \eta_j \iff \eta_i \underset{(r', \gamma)}{\succsim} \eta_j\Big) \quad \forall \eta_i, \eta_j,
\]
where \( \alpha > 0 \) and \( \beta \) are constants and the expectations \( \mathbb{E}_{\tau \sim \eta}[G_{r}(\tau)] \) and \( \mathbb{E}_{\tau \sim \eta}[G_{r'}(\tau)] \) are taken over the same trajectory distributions.
\end{lemma}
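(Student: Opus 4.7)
The plan is to use the characterization of preferences given in Definition \ref{def:preferences_trajectory_distribution} and show that a strictly increasing affine map on the expected returns cannot change the direction of any pairwise comparison. The argument is essentially a one-line application of the order-preservation property of positive linear transformations on $\mathbb{R}$, so I do not anticipate any deep technical obstacle.

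First, I would fix arbitrary trajectory distributions $\eta_i, \eta_j \in H(\mu)$ and unfold the definition of $\underset{(r',\gamma)}{\succsim}$, namely
\[
\eta_i \underset{(r', \gamma)}{\succsim} \eta_j \iff \mathbb{E}_{\tau \sim \eta_i}[G_{r'}(\tau)] \ge \mathbb{E}_{\tau \sim \eta_j}[G_{r'}(\tau)].
\]
Next, I would substitute the hypothesis $\mathbb{E}_{\tau \sim \eta}[G_{r'}(\tau)] = \alpha \mathbb{E}_{\tau \sim \eta}[G_{r}(\tau)] + \beta$ on both sides (applied separately to $\eta_i$ and $\eta_j$), which is valid because the hypothesis is assumed to hold for all trajectory distributions. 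Subtracting the common additive constant $\beta$ from both sides and then dividing by $\alpha$ (crucially using $\alpha > 0$ so that the inequality direction is preserved) reduces the inequality to
\[
\mathbb{E}_{\tau \sim \eta_i}[G_{r}(\tau)] \ge \mathbb{E}_{\tau \sim \eta_j}[G_{r}(\tau)],
\]
which is exactly the condition for $\eta_i \underset{(r, \gamma)}{\succsim} \eta_j$ by Definition \ref{def:preferences_trajectory_distribution}. Since every manipulation in the chain is an ``if and only if,'' this yields the required biconditional for arbitrary $\eta_i, \eta_j$.

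The main subtlety, if any, is simply to flag the role of the sign hypothesis $\alpha > 0$: if $\alpha$ were negative the direction of the inequality would reverse and the conclusion would fail (the induced order would be reversed instead of preserved), while if $\alpha = 0$ all distributions would collapse to the same preference class. Apart from noting this, the proof is a direct substitution, so I expect it to be short and essentially calculation-free.
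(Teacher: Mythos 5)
Your proof is correct and takes essentially the same approach as the paper: both arguments reduce to the fact that a positive affine map on expected returns preserves the order of any pairwise comparison, using $\alpha>0$ so the inequality direction is unchanged and noting that $\beta$ cancels. The only cosmetic difference is that you run a single chain of equivalences on the inequality, whereas the paper works with the return difference $\Delta_{i,j}\overline{G}$ and splits into the strict-preference and indifference cases; the content is the same.
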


\begin{lemma}\label{Sufficiency_lemma}[Sufficiency]
    In the infinite horizon setting, if two trajectory distributions \( \eta_i, \eta_j \in H(\mu) \), then potential-based reward shaping preserves their preference ordering with respect to the reward function, $r$, and the potential-based function, $r'$:
    \[
    \eta_i \underset{(r, \gamma)}{\succsim}\eta_j \iff \eta_i \underset{(r', \gamma)}{\succsim} \eta_j.
    \]
    \end{lemma}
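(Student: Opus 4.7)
The plan is to show that for any trajectory distribution $\eta \in H(\mu)$, the expected shaped return is a positive linear transformation of the expected unshaped return, and then invoke Lemma \ref{linear_transformations_expected_returns_preserves_preferences_main} to conclude that preferences are preserved.

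First I would work at the level of a single trajectory. Fix a trajectory $\tau = (s_0, a_0, s_1, a_1, s_2, \ldots)$ and substitute $r'(s_t, a_t, s_{t+1}) = r(s_t, a_t, s_{t+1}) + \gamma \Phi(s_{t+1}) - \Phi(s_t)$ into the definition of the return. The cross terms form a telescoping series:
\[
\sum_{t=0}^{\infty} \gamma^t \bigl[\gamma \Phi(s_{t+1}) - \Phi(s_t)\bigr] = \sum_{t=0}^{\infty} \gamma^{t+1} \Phi(s_{t+1}) - \sum_{t=0}^{\infty} \gamma^t \Phi(s_t) = -\Phi(s_0),
\]
where convergence in the infinite-horizon setting is standard under the usual boundedness assumptions on $\Phi$ and $\gamma \in [0,1)$. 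This yields the per-trajectory identity $G_{r'}(\tau) = G_r(\tau) - \Phi(s_0)$.

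Next I would take expectations with respect to an arbitrary $\eta \in H(\mu)$. By linearity of expectation,
\[
\mathbb{E}_{\tau \sim \eta}[G_{r'}(\tau)] = \mathbb{E}_{\tau \sim \eta}[G_r(\tau)] - \mathbb{E}_{s_0 \sim \mu}[\Phi(s_0)].
\]
The crucial observation is that because every $\eta \in H(\mu)$ shares the same initial-state distribution $\mu$ (by Definition \ref{Trajectory_Distributions}), the additive constant $\beta \doteq -\mathbb{E}_{s_0 \sim \mu}[\Phi(s_0)]$ is the same for every such $\eta$. Thus the relationship is exactly a positive linear transformation of the form required by Lemma \ref{linear_transformations_expected_returns_preserves_preferences_main}, with $\alpha = 1$ and $\beta$ as above.

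Finally, applying Lemma \ref{linear_transformations_expected_returns_preserves_preferences_main} to any pair $\eta_i, \eta_j \in H(\mu)$ immediately gives $\eta_i \underset{(r, \gamma)}{\succsim} \eta_j \iff \eta_i \underset{(r', \gamma)}{\succsim} \eta_j$, completing the proof. The main obstacle is subtle rather than technical: one must be careful that the telescoping simplification depends on the shared $\mu$ (so that $\mathbb{E}[\Phi(s_0)]$ cancels identically across distributions). If the two distributions had different start-state distributions, the additive term would differ between $\eta_i$ and $\eta_j$ and preferences could flip, which is precisely why the lemma is restricted to $H(\mu)$ and foreshadows the necessity direction of Theorem \ref{potenital_based_reward_shaping_both_cases}.
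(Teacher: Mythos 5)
Your proof is correct and follows essentially the same route as the paper's: telescoping the potential terms to obtain $\mathbb{E}_{\tau \sim \eta}[G_{r'}(\tau)] = \mathbb{E}_{\tau \sim \eta}[G_r(\tau)] - \mathbb{E}_{s_0 \sim \mu}[\Phi(s_0)]$, noting the additive constant is shared by all distributions in $H(\mu)$, and invoking Lemma \ref{linear_transformations_expected_returns_preserves_preferences_main} with $\alpha = 1$. The only cosmetic difference is that you telescope at the level of individual trajectories before taking expectations, whereas the paper carries out the same algebra inside the expectation.
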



\begin{proof}\label{Sufficiency_lemma_proof}
Let \( \eta_i, \eta_j \in H(\mu)\) be arbitrary trajectory distributions, and without loss of generality assume that \( \eta_i \underset{(r, \gamma)}{\succsim} \eta_j \). From Definition \eqref{def:preferences_trajectory_distribution}, this implies that $\mathbb{E}_{\tau \sim \eta_i} [G_r(\tau)] \ge \mathbb{E}_{\tau \sim \eta_j} [G_r(\tau)].$
We now analyze how the expected return changes under the potential-based reward function \( r' \).
The expected return under the reward function \( r \) and the shaped reward function \( r' \) is:
\begin{align}
\mathbb{E}_{\tau \sim \eta}[G_r(\tau)] &= \mathbb{E}_{\tau \sim \eta} \left[ \sum_{t=0}^\infty \gamma^t r(s_t, a_t, s_{t+1}) \right], \label{eq:expected_return_r}\\
\mathbb{E}_{\tau \sim \eta}[G_{r'}(\tau)] &= \mathbb{E}_{\tau \sim \eta} \left[ \sum_{t=0}^\infty \gamma^t r'(s_t, a_t, s_{t+1}) \right]\label{eq:expected_return_r_prime}
\end{align}

Substitute the definition of the potential-based reward function, Definition \eqref{def:potenital_based_reward_shaping}, into Equation \eqref{eq:expected_return_r_prime}:
\[
\mathbb{E}_{\tau \sim \eta}[G_{r'}(\tau)] = \mathbb{E}_{\tau \sim \eta} \left[ \sum_{t=0}^\infty \gamma^t \Big( r(s_t, a_t, s_{t+1}) + \gamma \Phi(s_{t+1}) - \Phi(s_t) \Big) \right]
\]
Distribute $\gamma^t$ and rearrange terms:
\[
\begin{aligned}
\mathbb{E}_{\tau\sim\eta}[G_{r'}(\tau)] &= \mathbb{E}_{\tau\sim \eta} \left[ \sum_{t=0}^\infty \gamma^t  r(s_t, a_t, s_{t+1}) + \gamma^t \gamma\Phi(s_{t+1}) - \gamma^t \Phi(s_t)  \right] \\
    &= \mathbb{E}_{\tau \sim \eta} \left[ \sum_{t=0}^\infty \gamma^t r(s_t, a_t, s_{t+1}) \right] 
    + \mathbb{E}_{\tau \sim \eta} \left[ \sum_{t=0}^\infty \gamma^{t+1} \Phi(s_{t+1}) - \gamma^t \Phi(s_t) \right] \\
    &= \mathbb{E}_{\tau \sim \eta}[G_{r}(\tau)] + \mathbb{E}_{\tau \sim \eta} \left[ \sum_{t=1}^\infty \gamma^{t} \Phi(s_t) - \sum_{t=0}^\infty \gamma^t \Phi(s_t) \right]
\end{aligned}
\]
Split $\sum_{t=0}^\infty \gamma^t \Phi(s_t)$ into two parts, one from $t=1$ to $\infty$ and the other isolating $t=0$:
\[
\begin{aligned}
     \mathbb{E}_{\tau\sim\eta}[G_{r'}(\tau)] &=\mathbb{E}_{\tau \sim \eta}[G_{r}(\tau)] + \mathbb{E}_{\tau \sim \eta} \left[ \sum_{t=1}^\infty \gamma^{t} \Phi(s_t) - \sum_{t=1}^\infty \gamma^t \Phi(s_t) - \gamma^0 \Phi(s_0) \right] 
\end{aligned}
\]
Now, combine like-terms and $\sum_{t=1}^\infty \gamma^{t} \Phi(s_t)$ gets canceled out:
\[
\mathbb{E}_{\tau \sim \eta}[G_{r'}(\tau)] = \mathbb{E}_{\tau \sim \eta}[G_r(\tau)] - \mathbb{E}_{\tau \sim \eta} \left[ \Phi(s_0) \right]
\]
As \( \Phi(s_0) \) depends only on the start-state distribution \( \mu \), and \( \mu \) is the same for all \( \eta \in H(\mu) \), we conclude that the expected returns under \( r \) and \( r' \) differ by a constant,
$\mathbb{E}_{s_{0}\sim \mu} \left[\Phi(s_0) \right]$:
\begin{equation}
\mathbb{E}_{\tau \sim \eta}[G_{r'}(\tau)] = \mathbb{E}_{\tau \sim \eta}[G_r(\tau)] - \mathbb{E}_{s_0 \sim \mu} \left[ \Phi(s_0) \right] \label{eq:shaping_expected_return_with_phi}
\end{equation}
As $\mathbb{E}_{\tau \sim \eta}[G_{r'}(\tau)]$ is a positive linear transformation of $\mathbb{E}_{\tau \sim \eta}[G_{r}(\tau)]$, we apply Lemma \ref{linear_transformations_expected_returns_preserves_preferences_main} and conclude that the preference remains unchanged under reward shaping:
\[
\eta_i \underset{(r, \gamma)}{\succsim} \eta_j \iff \eta_i \underset{(r', \gamma)}{\succsim} \eta_j \; \forall \eta_i, \eta_j \in H(\mu)
\]
\end{proof}
\begin{lemma}\label{Necessity_lemma_main}[Necessity]
   In the infinite horizon setting, if two trajectory distributions \( \eta_i \in H(\mu_i) \) and \( \eta_j \in H(\mu_j) \) have different start-state distributions (\(\mu_i \neq \mu_j\)), then there exists a potential function \( \Phi \) such that:
    \[
    \eta_i \underset{(r, \gamma)}{\succsim}  \eta_j \textit{ and } \eta_i \underset{(r', \gamma)}{\prec} \eta_j.
    \]
    \end{lemma}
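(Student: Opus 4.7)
The plan is to exploit the identity derived inside the sufficiency proof, namely Equation~\eqref{eq:shaping_expected_return_with_phi}, which expresses the expected return under the shaped reward as the original expected return minus a term that depends only on the start-state distribution through $\mathbb{E}_{s_0 \sim \mu}[\Phi(s_0)]$. The key observation is that this correction term is \emph{the same constant} for all trajectory distributions sharing a start-state distribution, but can differ arbitrarily between $\eta_i$ and $\eta_j$ when they come from different start-state distributions. This gives us a free parameter (the choice of $\Phi$) that we can tune to flip any desired preference ordering, which is exactly what the necessity claim demands.

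First, I would assume without loss of generality that $\eta_i \underset{(r,\gamma)}{\succsim} \eta_j$ and set $D := \mathbb{E}_{\tau \sim \eta_i}[G_r(\tau)] - \mathbb{E}_{\tau \sim \eta_j}[G_r(\tau)] \geq 0$. Applying Equation~\eqref{eq:shaping_expected_return_with_phi} to each of $\eta_i$ and $\eta_j$ and subtracting, I obtain
\[
\mathbb{E}_{\tau \sim \eta_i}[G_{r'}(\tau)] - \mathbb{E}_{\tau \sim \eta_j}[G_{r'}(\tau)] = D - \Bigl(\mathbb{E}_{s_0 \sim \mu_i}[\Phi(s_0)] - \mathbb{E}_{s_0 \sim \mu_j}[\Phi(s_0)]\Bigr).
\]
To establish $\eta_i \underset{(r', \gamma)}{\prec} \eta_j$, it suffices to construct $\Phi$ such that $\mathbb{E}_{s_0 \sim \mu_i}[\Phi(s_0)] - \mathbb{E}_{s_0 \sim \mu_j}[\Phi(s_0)] > D$.

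Next, I would use the hypothesis $\mu_i \neq \mu_j$ to produce such a $\Phi$. Since the two start-state distributions differ, there exists a state $s^\star$ (or, in the continuous case, a measurable set) on which $\mu_i$ and $\mu_j$ disagree; without loss of generality, $\mu_i(s^\star) > \mu_j(s^\star)$ (otherwise swap roles and take the opposite sign of $c$ below). Define $\Phi(s) = c \cdot \mathbf{1}[s = s^\star]$ for a positive constant $c$ to be chosen. Then $\mathbb{E}_{s_0 \sim \mu_i}[\Phi(s_0)] - \mathbb{E}_{s_0 \sim \mu_j}[\Phi(s_0)] = c(\mu_i(s^\star) - \mu_j(s^\star))$, which scales linearly with $c$ and thus can be made to exceed any fixed $D \geq 0$ by choosing $c$ sufficiently large. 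Substituting back yields $\mathbb{E}_{\tau \sim \eta_i}[G_{r'}(\tau)] < \mathbb{E}_{\tau \sim \eta_j}[G_{r'}(\tau)]$, giving the desired preference reversal under $r'$.

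I do not anticipate a substantial obstacle: the proof is a counterexample construction and the heavy lifting (the telescoping identity for potential-based shaping) has already been done in Lemma~\ref{Sufficiency_lemma}. The only care points are handling the edge case where $D = 0$ (a tie under $r$), which still satisfies $\eta_i \succsim \eta_j$ and still admits a strict flip by choosing any $c > 0$, and, if the state space is continuous, replacing the point $s^\star$ with a measurable set $A$ on which $\mu_i(A) \neq \mu_j(A)$ (whose existence is guaranteed by $\mu_i \neq \mu_j$ as probability measures) and taking $\Phi = c \cdot \mathbf{1}_A$.
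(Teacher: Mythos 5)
Your proposal is correct and follows essentially the same route as the paper: both reduce the problem via the identity $\mathbb{E}_{\tau\sim\eta}[G_{r'}(\tau)]=\mathbb{E}_{\tau\sim\eta}[G_r(\tau)]-\mathbb{E}_{s_0\sim\mu}[\Phi(s_0)]$ from the sufficiency proof and then construct a potential whose start-state expectation gap exceeds the return gap $\Delta_{i,j}\overline{G}(r)$. The only difference is cosmetic: the paper scales an indicator-style potential supported on the whole set $\{s:\mu_i(s)>\mu_j(s)\}$ with an explicitly normalized constant yielding $\Delta_{i,j}\overline{G}(r)+\epsilon$, whereas you scale an indicator on a single state (or measurable set) by a sufficiently large constant $c$ --- both are valid, and you correctly handle the tie case $D=0$.
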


We leave the proof for the necessity condition (Lemma \eqref{Necessity_lemma_main} to Supplementary Material \ref{sec:proofs_detailed}.
\begin{theorem}\label{potenital_based_reward_shaping_both_cases}
In the infinite-horizon setting, let $(r,\gamma)$ and two trajectory distributions \( \eta_i \in H(\mu_i) \) and \( \eta_j \in H(\mu_j) \) be given. Potential-based reward shaping is guaranteed to maintain the preference ordering over all trajectory distributions if and only if \( \mu_i = \mu_j \).
Formally,  
\begin{gather*}
    \big(\eta_i \underset{(r, \gamma)}{\succsim} \eta_j \iff \eta_i \underset{(r', \gamma)}{\succsim} \eta_j\big)
     \; \forall \Phi\in\mathcal{F} \iff \mu_i = \mu_j
\end{gather*}
\end{theorem}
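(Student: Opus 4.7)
The plan is to prove the biconditional by combining the two lemmas already established, namely the Sufficiency Lemma \ref{Sufficiency_lemma} and the Necessity Lemma \ref{Necessity_lemma_main}. Since the theorem is an ``if and only if'' statement, the proof naturally decomposes into a forward and a backward direction, and both directions essentially amount to citing and packaging the lemmas correctly.

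For the sufficiency direction ($\mu_i = \mu_j \implies$ invariance), I would observe that if $\mu_i = \mu_j = \mu$, then both $\eta_i$ and $\eta_j$ lie in the same set $H(\mu)$ as defined in Definition \ref{Trajectory_Distributions}. Lemma \ref{Sufficiency_lemma} exactly shows that for any two trajectory distributions in $H(\mu)$ and for any potential function $\Phi$, the preference ordering under $(r,\gamma)$ coincides with that under $(r',\gamma)$. Hence $\big(\eta_i \underset{(r, \gamma)}{\succsim} \eta_j \iff \eta_i \underset{(r', \gamma)}{\succsim} \eta_j\big)$ holds for all $\Phi \in \mathcal{F}$.

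For the necessity direction (invariance for all $\Phi \implies \mu_i = \mu_j$), I would proceed by contraposition. Assume $\mu_i \neq \mu_j$. Then Lemma \ref{Necessity_lemma_main} guarantees the existence of a potential function $\Phi \in \mathcal{F}$ such that $\eta_i \underset{(r, \gamma)}{\succsim} \eta_j$ but $\eta_i \underset{(r', \gamma)}{\prec} \eta_j$, where $r'$ is the corresponding shaped reward. This witnesses a $\Phi$ for which the biconditional over preferences fails, contradicting the assumption that invariance holds for all $\Phi \in \mathcal{F}$. Therefore $\mu_i = \mu_j$.

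The main obstacle, in principle, lies entirely inside Lemma \ref{Necessity_lemma_main}, which is deferred to the supplementary material; the theorem itself is essentially a bookkeeping step once that lemma is in hand. The one subtlety I would be careful about in writing is ensuring that the quantifier structure (``for all $\Phi$'') lines up correctly with the existential statement produced by Lemma \ref{Necessity_lemma_main} (``there exists a $\Phi$''), so that the contrapositive argument is logically clean.
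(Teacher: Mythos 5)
Your proposal is correct and matches the paper's approach exactly: the paper likewise obtains Theorem \ref{potenital_based_reward_shaping_both_cases} directly by combining Lemma \ref{Sufficiency_lemma} (sufficiency) with Lemma \ref{Necessity_lemma_main} (necessity via contraposition), with the substantive work deferred to the lemmas. Your attention to the quantifier structure (universal over $\Phi$ versus the existential witness from the necessity lemma) is exactly the right bookkeeping point.
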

where $\mathcal{F}$ is the space of all potential-based shaping functions and $\Phi:\mathcal{S}\mapsto\mathbb{R}$ is an arbitrary function in this space. The proof of Theorem \eqref{potenital_based_reward_shaping_both_cases} follows directly from Lemmas \eqref{Sufficiency_lemma} and \eqref{Necessity_lemma_main}.

\begin{definition}\label{tac_invariance}  
Let \( D_{r, \gamma} \), \( D_{r', \gamma} \), and \( D_h \) be preference data sets over trajectory distributions induced by \( (r, \gamma) \), \( (r', \gamma) \), and a human, respectively, where \( r' \doteq f(r) \) for some transformation \( f \).  
The Trajectory Alignment Coefficient \( \sigma_{\textit{TAC}} \) is invariant to \( f \) if and only if $
\sigma_{\text{TAC}}(D_h, D_{r, \gamma}) = \sigma_{\text{TAC}}(D_h, D_{r', \gamma})$.
\end{definition}

 \begin{theorem}\label{tac_invariance_pbrf}
Consider the infinite-horizon setting. 
Let $r$ and $r'$ be reward functions where \( r' \) is a shaped version of \( r \) using potential-based reward shaping. 
Let $D_h$ be a data set of human preferences over trajectory distributions, and define $D_{r, \gamma} = D_{r, \gamma}(\upsilon_h, r, \gamma)$ and $D_{r', \gamma} = D_{r', \gamma}(\upsilon_h, r', \gamma)$ as the preference data sets induced by $(r, \gamma)$ and $(r', \gamma)$, respectively. The Trajectory Alignment Coefficient is invariant to any potential-based reward shaping if and only if within each set of trajectory distributions $\{\eta_i, \eta_j\} \in \upsilon_h(D_h)$, $\eta_i, \eta_j$ share the same initial state distribution 
(i.e., $\eta_i, \eta_j \in$ \(H(\mu)\)).
\end{theorem}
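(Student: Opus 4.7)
My plan is to split the biconditional and reduce each direction to the two pre-established lemmas, working at the level of the preference datasets that feed Equation~\eqref{eq:tac}. Because $D_{r, \gamma}$ and $D_{r', \gamma}$ are built from the same unordered pairs $\upsilon_h(D_h)$, any difference in $\sigma_{\textit{TAC}}$ across the two reward functions must originate from a difference in the pairwise preference relations on those pairs.

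For sufficiency ($\Leftarrow$), I assume every pair $\{\eta_i, \eta_j\} \in \upsilon_h(D_h)$ satisfies $\eta_i, \eta_j \in H(\mu)$ for a common $\mu$, and apply Lemma~\ref{Sufficiency_lemma} pairwise: for any potential function $\Phi$, the induced relation under $(r, \gamma)$ equals that under $(r', \gamma)$. Hence $D_{r, \gamma}$ and $D_{r', \gamma}$ contain the same preference relation on each pair and are therefore identical, so $\sigma_{\textit{TAC}}$, being a function only of $D_h$ and the induced preference dataset, is unchanged, giving invariance in the sense of Definition~\ref{tac_invariance}.

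For necessity ($\Rightarrow$), I argue the contrapositive: if some pair $\{\eta_i, \eta_j\} \in \upsilon_h(D_h)$ has $\mu_i \neq \mu_j$, I exhibit a $\Phi$ for which $\sigma_{\textit{TAC}}(D_h, D_{r, \gamma}) \neq \sigma_{\textit{TAC}}(D_h, D_{r', \gamma})$. Lemma~\ref{Necessity_lemma_main} supplies a candidate $\Phi$ reversing the preference on the violating pair, while Equation~\eqref{eq:shaping_expected_return_with_phi} (from the proof of Lemma~\ref{Sufficiency_lemma}) guarantees that any other pair whose two distributions share a start-state distribution is automatically unaffected by shaping, so only pairs with $\mu_{i'} \neq \mu_{j'}$ can contribute additional changes.

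The hard part is verifying that flipping a single pair actually moves $\sigma_{\textit{TAC}}$, and that in the presence of multiple mismatched-start-state pairs the simultaneous flips do not cancel in the $P - Q$ count of Equation~\eqref{eq:tac}. To handle this, I plan to work with the one-parameter family $\Phi_\alpha = \alpha \Phi_0$, where $\Phi_0$ is supported on a state distinguishing $\mu_i$ and $\mu_j$ and otherwise chosen in general position, so that each mismatched pair's preference flips at a distinct threshold $\alpha_k^\ast$ as $\alpha$ sweeps $\mathbb{R}$. Between two consecutive thresholds exactly one pair flips relative to $(r, \gamma)$, and a short case analysis on how a single flip alters $(P, Q, X_0, Y_0)$ shows that the numerator or the denominator of Equation~\eqref{eq:tac} strictly changes, delivering a $\Phi$ that witnesses non-invariance and completing the proof.
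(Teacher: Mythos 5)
Your overall decomposition mirrors the paper's: both directions are reduced pairwise to Lemma~\ref{Sufficiency_lemma} and Lemma~\ref{Necessity_lemma_main} (equivalently Theorem~\ref{potenital_based_reward_shaping_both_cases}), and your sufficiency direction is exactly the paper's argument --- when every compared pair lies in a common $H(\mu)$, the induced datasets $D_{r,\gamma}$ and $D_{r',\gamma}$ coincide, so the coefficients coincide. Where you diverge is necessity: the paper stays at the level of the induced preference relations (its proof asserts that equality of $\sigma_{\textit{TAC}}$ across all shapings is equivalent to preservation of every compared pair's relation, then invokes Theorem~\ref{potenital_based_reward_shaping_both_cases}), whereas you attempt the stronger claim that some $\Phi$ strictly changes the numerical value of Equation~\eqref{eq:tac}.

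That stronger step, as sketched, has a genuine hole. The deferred case analysis is false in the tie case: if the human's relation on the flipped pair is $\sim$, the pair is counted in $Y_0$ whether the reward-induced relation is $\succ$ or $\prec$, so $(P,Q,X_0,Y_0)$ and hence $\sigma_{\textit{TAC}}$ are unchanged by that flip. Moreover, the ``general position'' device cannot always separate thresholds: two compared pairs whose start-state distribution pairs coincide and whose return gaps under $r$ are equal have shaped gaps that are the same function of $\Phi$, so they flip together for \emph{every} potential function; if one is concordant and the other discordant, the simultaneous flips cancel in $P-Q$ and $\sigma_{\textit{TAC}}$ stays fixed for all $\Phi$ even though the start distributions are mismatched. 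So the sweep argument cannot be completed without additional assumptions (e.g., strict human preferences and nondegenerate, pairwise-distinct return gaps). If you instead adopt the paper's reading of Definition~\ref{tac_invariance} --- invariance as preservation of the induced pairwise relations feeding Equation~\eqref{eq:tac} --- then your sufficiency half stands as is, and necessity follows directly from Lemma~\ref{Necessity_lemma_main} applied to the offending pair, with no counting over $(P,Q,X_0,Y_0)$ required.
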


\begin{proof}
Let $\{\eta_i, \eta_j\} \in \upsilon_h(D_h)$ be an arbitrary pair of trajectory distributions compared in the human preference data set. 
By Definition \eqref{eq:tac} of $\sigma_{\text{TAC}}$, the following biconditional holds:
\begin{gather*}
\sigma_{\text{TAC}}(D_h, D_{r, \gamma}) = \sigma_{\text{TAC}}(D_h, D_{r', \gamma}) 
\iff \forall \{\eta_i, \eta_j\} \in \upsilon_h(D_h): \big(\eta_i \underset{(r, \gamma)}\diamond \eta_j \iff \eta_i \underset{(r', \gamma)}\diamond \eta_j\big)
\end{gather*}
where $\diamond \in \{\succ, \prec, \sim\}$ denotes the preference relation.
By Theorem \eqref{potenital_based_reward_shaping_both_cases}, we have:
\begin{gather*}
    \big(\eta_i \underset{(r, \gamma)}{\diamond} \eta_j \iff \eta_i \underset{(r', \gamma)}{\diamond} \eta_j\big)
     \; \forall \Phi\in\mathcal{F}, \eta_i \in H(\mu_i), \eta_j \in H(\mu_j) \iff \mu_i = \mu_j
\end{gather*}
Therefore, we conclude that $\sigma_{\textit{TAC}}$ is invariant to potential-based reward shaping if and only if all sets of trajectory distributions being compared share the same initial state distribution, $\mu$:
\begin{gather*}
\sigma_{\text{TAC}}(D_h, r, \gamma) = \sigma_{\text{TAC}}(D_h, r', \gamma)\;\forall \Phi\in\mathcal{F} \iff \eta_i, \eta_j \in H(\mu), \; \forall \{\eta_i, \eta_j\} \in \upsilon_h(D_h)
\end{gather*}
\end{proof}
\begin{theorem}\label{tac_invariant_pos_linear_transformation}
 Given the infinite-horizon setting, the Trajectory Alignment Coefficient is invariant to positive linear transformations. 
\end{theorem}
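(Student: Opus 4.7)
The plan is to show that a positive linear transformation of the reward function only induces a positive linear transformation of the expected return, which by Lemma \ref{linear_transformations_expected_returns_preserves_preferences_main} preserves every pairwise preference over trajectory distributions, and hence leaves the concordant/discordant/tie counts in Equation \eqref{eq:tac} unchanged. This reduces the theorem to two routine steps plus one invocation of Definition \ref{tac_invariance}.

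First, I would fix an arbitrary positive linear transformation of the reward function, writing $r'(s,a,s') \doteq \alpha r(s,a,s') + \beta$ with $\alpha > 0$ and $\beta \in \mathbb{R}$. Using the infinite-horizon return definition from Section \ref{sec:background}, I would compute
\[
G_{r'}(\tau) \;=\; \sum_{t=0}^{\infty} \gamma^{t}\bigl(\alpha r_{t+1} + \beta\bigr) \;=\; \alpha\, G_{r}(\tau) + \frac{\beta}{1-\gamma},
\]
where convergence of the geometric series relies on $\gamma \in [0,1)$ as in the MDP definition. Taking expectations over any trajectory distribution $\eta$ yields
\[
\mathbb{E}_{\tau \sim \eta}[G_{r'}(\tau)] \;=\; \alpha\, \mathbb{E}_{\tau \sim \eta}[G_{r}(\tau)] + \frac{\beta}{1-\gamma},
\]
which is a positive linear transformation of the expected return with the \emph{same} constants for every $\eta$. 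Applying Lemma \ref{linear_transformations_expected_returns_preserves_preferences_main} then gives $\eta_i \underset{(r,\gamma)}{\succsim} \eta_j \iff \eta_i \underset{(r',\gamma)}{\succsim} \eta_j$ for all pairs.

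Second, I would use this preference equivalence to conclude $\sigma_{\textit{TAC}}$-invariance. Because the induced preference relations over the pairs in $\upsilon_h(D_h)$ coincide under $(r,\gamma)$ and $(r',\gamma)$, the datasets $D_{r,\gamma}(\upsilon_h, r, \gamma)$ and $D_{r',\gamma}(\upsilon_h, r', \gamma)$ assign the same relation $\diamond \in \{\succ, \prec, \sim\}$ to each pair. Consequently, the four quantities $P$, $Q$, $X_0$, $Y_0$ in Equation \eqref{eq:tac} are identical for both datasets, so $\sigma_{\text{TAC}}(D_h, D_{r, \gamma}) = \sigma_{\text{TAC}}(D_h, D_{r', \gamma})$, which is exactly the invariance stipulated by Definition \ref{tac_invariance}.

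I do not anticipate a real obstacle here: once the expected-return identity above is established, the remainder is a direct appeal to Lemma \ref{linear_transformations_expected_returns_preserves_preferences_main} and the definition of $\sigma_{\textit{TAC}}$. The only subtle point worth flagging is that, unlike the potential-based case in Theorem \ref{tac_invariance_pbrf}, no assumption on shared start-state distributions is needed: the additive shift $\beta/(1-\gamma)$ is a single global constant independent of $\eta$, so the argument applies uniformly across all trajectory distributions in $\upsilon_h(D_h)$.
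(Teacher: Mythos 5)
Your proposal is correct and follows essentially the same route as the paper's own proof: establish that $\mathbb{E}_{\tau \sim \eta}[G_{r'}(\tau)] = \alpha\,\mathbb{E}_{\tau \sim \eta}[G_{r}(\tau)] + \frac{\beta}{1-\gamma}$, invoke Lemma \ref{linear_transformations_expected_returns_preserves_preferences_main} to preserve all pairwise preferences, and conclude via Definition \ref{tac_invariance} and Equation \eqref{eq:tac}. The only cosmetic difference is that you transform the return at the trajectory level before taking expectations, whereas the paper substitutes inside the expectation; your closing remark that no shared start-state assumption is needed is a correct and worthwhile observation.
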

To prove Theorem \eqref{tac_invariant_pos_linear_transformation}, we show that a positive linear transformation linearly transforms the expected return. We can then apply Lemma \eqref{linear_transformations_expected_returns_preserves_preferences_main} and the Trajectory Alignment Coefficient's invariance definition \eqref{tac_invariance} to complete the proof. The full derivation is provided in Supplementary Material~\ref{sec:proofs_detailed}.

\subsection{Trajectory Alignment Coefficient in Practice}\label{sec:tac_in_practice_main}
To use the Trajectory Alignment Coefficient in practice, two parameters must be specified: the number of trajectories (or trajectory distributions) to rank and the trajectory sampling method. To obtain a meaningful evaluation, it is important to include a diverse set of trajectories. If trajectories are limited to a specific region of the state and action space, the evaluation may fail to reveal reward misalignment in other regions.
To mitigate this, one heuristic we propose is to sample trajectories that exhibit qualitatively different behaviors. In our experiments, we generate these trajectories via RL agents partially trained with different reward functions. However, other sources, such as human demonstrations or trajectories generated through behavioral cloning, could also be used. 
Beyond diversity, the number of ranked trajectories plays a critical role. While more trajectories provide a clearer picture of reward alignment, ranking too many is impractical for humans. We found that in our tested domain (Hungry-Thirsty, described in Section \ref{sec:experimental_design}), the Trajectory Alignment Coefficients from a subset of $12$ trajectories were highly correlated 
with those from a set of $1200$, suggesting that a smaller, well-chosen set can still provide reliable estimates. See Supplementary Material \ref{sec:traj_sample_size_study} for further details.

To illustrate the Trajectory Alignment Coefficient in practice, we revisit the toy example in Figure \ref{fig:AD_alignnment_example}, demonstrating how reward-based rankings can diverge from human preferences
Specifically, the toy $(r, \gamma)$ pair produced a preference ordering of
$\tau_{\text{{success}}} \succ \eta_{\text{success-crash}} \succ \tau_{\text{{idle}}} \succ \tau_{\text{{crash}}}$. However, a human stakeholder would likely prefer remaining parked over possibly crashing: 
$\tau_{\text{{success}}} \succ \tau_{\text{{idle}}} \succ \eta_{\text{success-crash}} \succ \tau_{{{crash}}}$.
To compute \(\sigma_{\textit{TAC}}\) from Equation \eqref{eq:tac}, we count the number of concordant and discordant pairs. In our four-element example, six pairwise comparisons are possible, with all but one being concordant. The only discordant pair is \( (\eta_{\text{success-crash}}, \tau_{\text{idle}})\), where the preference is reversed. Since there are no ties, \(X_0\) and \(Y_0\) are zero. Substituting these values into the equation yields  \(\sigma_{\textit{TAC}}\approx 0.67\), indicating misalignment between \((r, \gamma)\) and the human stakeholder's preferences.

\section{Experimental Design}\label{sec:experimental_design}

This study examines the reward design setting where RL practitioners have to choose between reward functions in order to satisfy the preferences of another stakeholder (e.g., a domain expert). In particular, \emph{our goal is to investigate whether the Trajectory Alignment Coefficient can assist RL practitioners in reward selection}.
We assess this by comparing RL practitioners with and without access to our metric, focusing on two key dimensions:
\begin{enumerate}
    \item \textbf{Perceived Benefit} —  Does it reduce perceived cognitive workload, increase ease of use, and improve understanding of reward functions?
    \item \textbf{Practical Impact} — Does access to the metric help RL practitioners choose reward functions that improve performance of learned policies while also reducing the time spent on reward selection?
    
\end{enumerate}

We conducted an ethics-approved human subject study with $11$ self-identified RL practitioners. This included individuals who had completed graduate courses in RL $(81\text{\%})$, conducted RL research $(100\text{\%})$, or applied RL in their professional work $(27\text{\%})$. Note these categories were not mutually exclusive. In this within-subjects study, participants selected reward functions under three experimental conditions with different types of assistance. 
The study was primarily in-person and each session lasted approximately $90$ minutes.

\paragraph{Testbed: Hungry-Thirsty}
Our study is conducted in a modified Hungry-Thirsty domain \citep{singh2009rewards}, an environment where others have shown~\citep{perils_reward_design} that RL practitioners struggle with reward design. It is a $4 \times 4$ grid-world where food and water are randomly placed at the grid corners (see Figure \ref{fig:hungry_thirsty_pic} in Supplementary Material \ref{sec:env_details}). The agent can move in one of the four cardinal directions or execute eat or drink actions. The agent's goal is to maximize time spent without hunger. Hunger occurs if the agent has not eaten in the previous timestep, but eating is only possible at a food source when the agent is not thirsty. If the agent is thirsty, the eat action fails. The agent becomes thirsty with $0.10$ probability per step. This is an infinite-horizon MDP, although each episode is truncated after $200$ timesteps.
The state space consists of the agent’s position and two Boolean variables for hunger and thirst. The reward function is a linear combination of these variables.  The evaluation metric is the number of timesteps the agent is not hungry.

\paragraph{Study Protocol}
The study consisted of two primary components, Preference Review and Reward Selection. 
In the \emph{Preference Review} component, participants first read a description of the Hungry-Thirsty domain and then completed a short quiz and an interactive game-play session to confirm their understanding of its rules.
They were informed that they would be collaborating with a domain expert to select a reward function for training an RL agent, with the expert providing a ranking of $12$ trajectories (generated using the task evaluation metric as a proxy for expert preferences). 
Participants then reviewed this ranking alongside corresponding video clips.
To obtain these trajectories, we use the mixture sampling method described in Section \ref{sec:tac_in_practice_main}.
\begin{wrapfigure}{r}{0.42\textwidth}  
  \centering
  \includegraphics[width=0.70\textwidth, trim=0cm 10cm 5cm 5cm, clip]{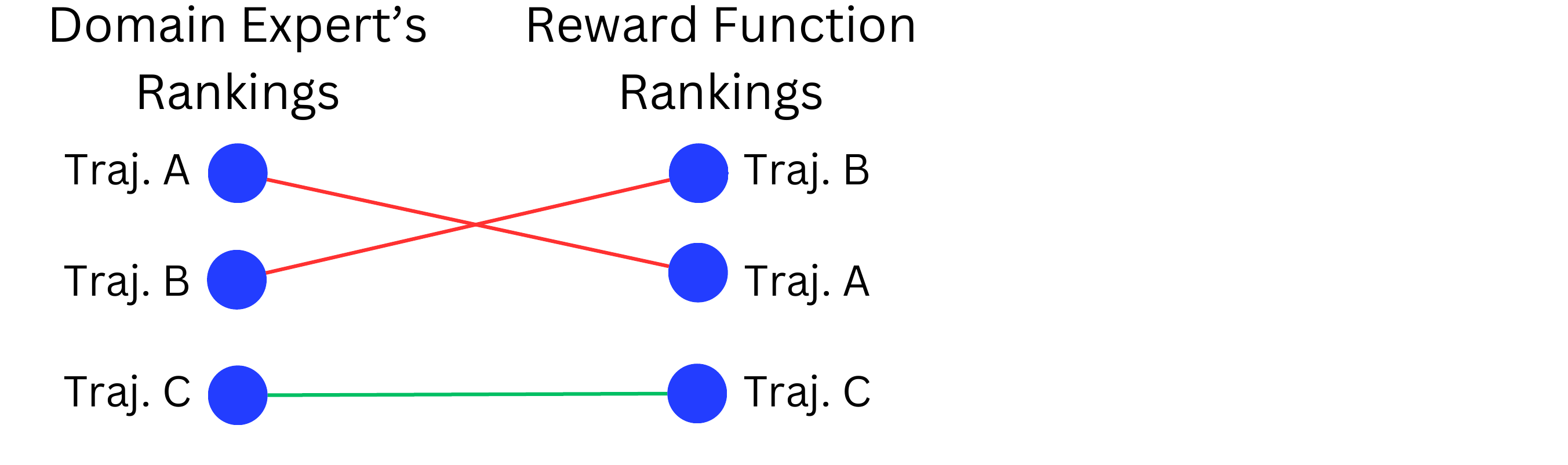}
  \caption{Visualization comparing rankings of $3$ trajectories, used in the Reward $+$ Alignment $+$ Visualization condition.}
  \label{fig:example_visualization}
\end{wrapfigure}

In the \emph{Reward Selection phase}, participants completed four rounds of reward selection, with the goal of choosing the reward function that best reflects the domain expert's preferences.
To select reward functions for this component, we considered those from the open-sourced human reward data set in \cite{perils_reward_design} and their affine transformations. See Table \ref{tab:final_return_comparison} in Supplementary Material \ref{sec:ui} for the complete set of reward function comparisons. 
When selecting reward functions for comparisons, we focused on reward functions that: (1) differed in magnitude, scale, or range, and (2) 
myopically ranked states based on immediate reward. Across all conditions, participants had access to two reward functions at a time and could revisit the trajectory rankings along with their respective video clips. Moreover, reward function pairs were randomized and were not repeated across the study.
See Figures \ref{fig:user_study_interface1}--\ref{fig:visualization} in Supplementary Material \ref{sec:ui} for the user interface.
The study contained three conditions, whose order was randomized for each participant:
\begin{itemize}
    \item \textbf{Reward Only (Control):} The reward functions were shown but no further information was given. 
    \item \textbf{Reward + Alignment:} Participants also received the Trajectory Alignment Coefficient, computed from the domain expert's preferences and those induced by each reward function.
    \item \textbf{Reward + Alignment + Visual:} 
Participants were also provided the Trajectory Alignment Coefficient and a parallel coordinate plot illustrating differences in the domain expert's and reward functions' rankings over trajectories (see example in Figure \ref{fig:example_visualization}).
\end{itemize}
\paragraph{Evaluation}
To assess differences in \emph{perceived benefit} across conditions, participants completed a modified NASA Task Load Index (NASA-TLX) \citep{hart1988development}, which measured cognitive work load (scale from $1$--$7$). The survey included all six NASA-TLX questions, along with three additional questions on confidence in reward selection, helpfulness of feedback (e.g., alignment, visual, and/or reward function), and ease of integrating feedback into decisions (see Figure \ref{fig:condition_experiences_survey} in Supplementary Material \ref{sec:ui}). 
Participants completed this survey after each condition.
We then compute the overall workload based on the survey responses.
After completing all reward selection conditions, participants voted on which condition best improved their understanding of the reward function, provided the most useful feedback, was least mentally demanding, and made reward selection the easiest (see Figure \ref{fig:comparison_survey} in Supplementary Material \ref{sec:ui}).
We also included open-ended questions for participants to describe their experiences during the reward selection process.

To assess the \emph{practical impact} of the conditions, we first examined how often users selected reward functions that improved policy performance compared to the unselected alternative rewards. 
For shorthand, we refer to these reward functions as performant or policy-improving. 
Specifically, we calculated the proportion of times users chose the reward function that resulted in a higher final return and a greater area under the learning curve (AUC), with respect to the evaluation metric.
To evaluate the policy performance of the reward functions, we trained Q--Learning, SARSA, and Expected SARSA agents on each reward function, performing a grid search over learning rate $\in \{10^{-k}, 5 \times 10^{-k} \mid k \in \{2, 3, 4\}\}$, and epsilon $\in \{0.01, 0.05, 0.15\}$. Each agent was trained across $10$ environment seeds. We then averaged final returns and AUC, separately,  across all trained agents to assess performance. By using multiple RL algorithms and varying hyperparameters, we aimed to reduce the likelihood that a reward function’s performance was due to random chance or a particularly favorable choice of hyperparameters.
Second, we measured the time taken to complete the reward selection process.


For all analyses, we used paired \(t\)--tests for continuous data when normality assumptions held and Wilcoxon Signed-Rank tests otherwise. For categorical voting data, Fisher’s Exact Test was applied.
The corresponding 
$p$--values and test statistics are reported: \(t\) (paired \(t\)--test) and \(W\) (Wilcoxon Signed-Rank test).
We performed the Bonferroni correction (\(\alpha=0.05\)) to control for Type~I errors.
\section{Results}
This section presents the results from the user study, structured around our two research components, perceived benefit and practical impact, as well as a qualitative analysis of participants' experiences.
\paragraph{Perceived Benefit}
Figure \ref{fig:survey_results}(a) presents the results of the NASA-TLX survey: the average ratings ($\pm$ standard error) for five selected questions and the overall workload score. We chose a subset of questions for this plot to avoid redundancy.
We found that the overall workload score was significantly lower for the Reward + Alignment ($\mu=1.97$, $p=0.003$, $t=-3.46$) and Reward + Alignment + Visual ($\mu=1.83$, $p=0.02$, $W=10.0$) conditions compared to the Reward Only condition ($\mu=3.19$), representing a $1.5x$ reduction in workload. 
Similarly, \ref{fig:survey_results}(b) shows the total number of participant votes for each condition, reflecting preferences across different criteria.
Notably, $100$\% of participants reported that either alignment conditions led to easier decision-making. Additionally, $91$\% indicated that either alignment conditions improved their understanding of the reward functions and reduced mental demand, while $82$\% found the provided information most useful---all of which are significantly greater than the number of votes for the Reward Only condition $(p \le 0.009)$.


\paragraph{Practical Impact}

In Figure \ref{fig:practical_impact_results}(b), we found that participants achieved significantly greater success in selecting the policy-improving reward functions in both the Reward + Alignment ($\mu=0.93$, $p=0.01$, $W=41.0$) and Reward + Alignment + Visual ($\mu=0.96$, $p=0.008$, $W=28.0$) conditions compared to the Reward Only condition ($\mu=0.66$). Specifically, we found that in the Reward Only condition, $55$\% of participants selected policy-improving reward functions no better than random (or worse). 
However, the time-to-completion data, shown in Figure \ref{fig:practical_impact_results}(a), provide a more complete picture. While participants on average took longer in the Reward Only condition ($\mu = 660.89$) compared to both the Reward + Alignment ($\mu = 334.09$, $p = 0.04$, $W = 13.0$) and Reward + Alignment + Visual conditions ($\mu = 393.44$, $p = 0.07$, $W = 16.0$), these differences were not statistical significant. It is important to acknowledge that deriving the preference ordering used in the alignment conditions itself requires time, which is not accounted for in this comparison.

Beyond aggregate trends, individual differences in time use revealed interesting patterns. Notably, six participants (P$6$--P$11$) spent more time in the Reward Only condition but still performed worse in reward selection, highlighting that more time without alignment support did not lead to better outcomes.
Moreover, three participants (P$1$--P$3$) achieved perfect success rates in the Reward Only condition while spending less time than in the Reward + Alignment + Visual condition. 
This suggests that they may have required less assistance during reward selection, and the additional visual feedback in the Reward + Alignment + Visual condition likely introduced more information for them to process, increasing deliberation time. 
These results suggest that while alignment-based feedback improved reward selection success for most participants, some succeeded without it.
\begin{figure}[h!]
  \centering

  \hfill
  {\includegraphics[width=0.99\textwidth, trim = 0cm 4.3cm 0cm 9cm, clip]{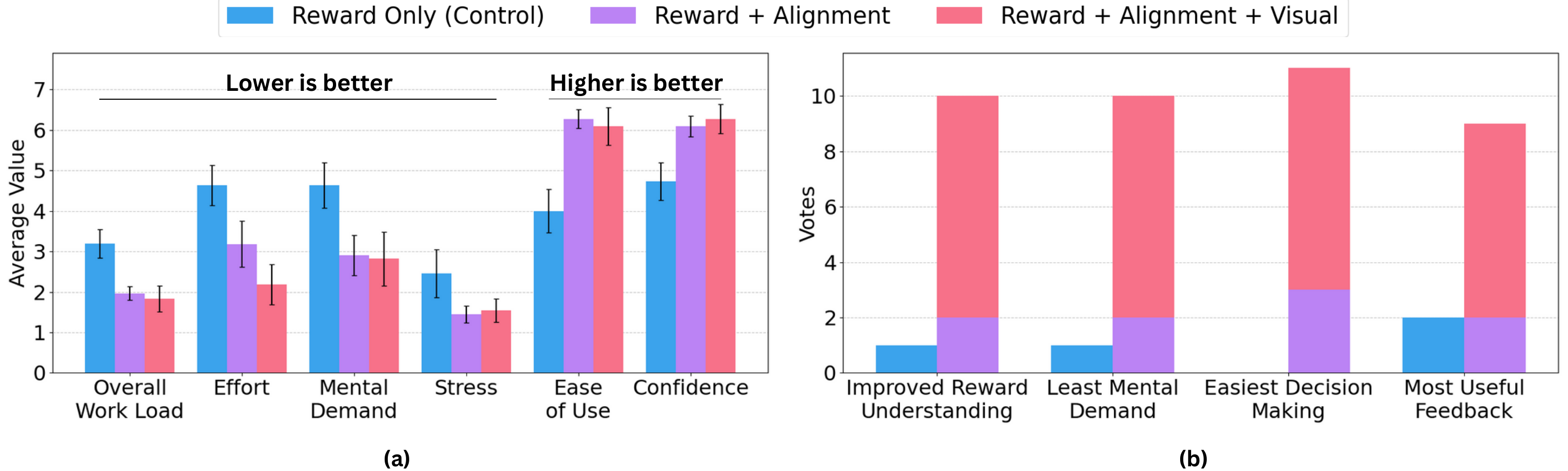}}

  \caption{Results from participants' experience during reward selection are shown from (a) participants given the NASA-TLX survey and (b) a survey assessing different aspects of favorability.} 
  
  \label{fig:survey_results}
\end{figure}
\begin{figure}[h!]
  \centering

  \hfill
  {\includegraphics[width=0.99\textwidth, trim = 0cm 4.5cm 0cm 3.3cm, clip]{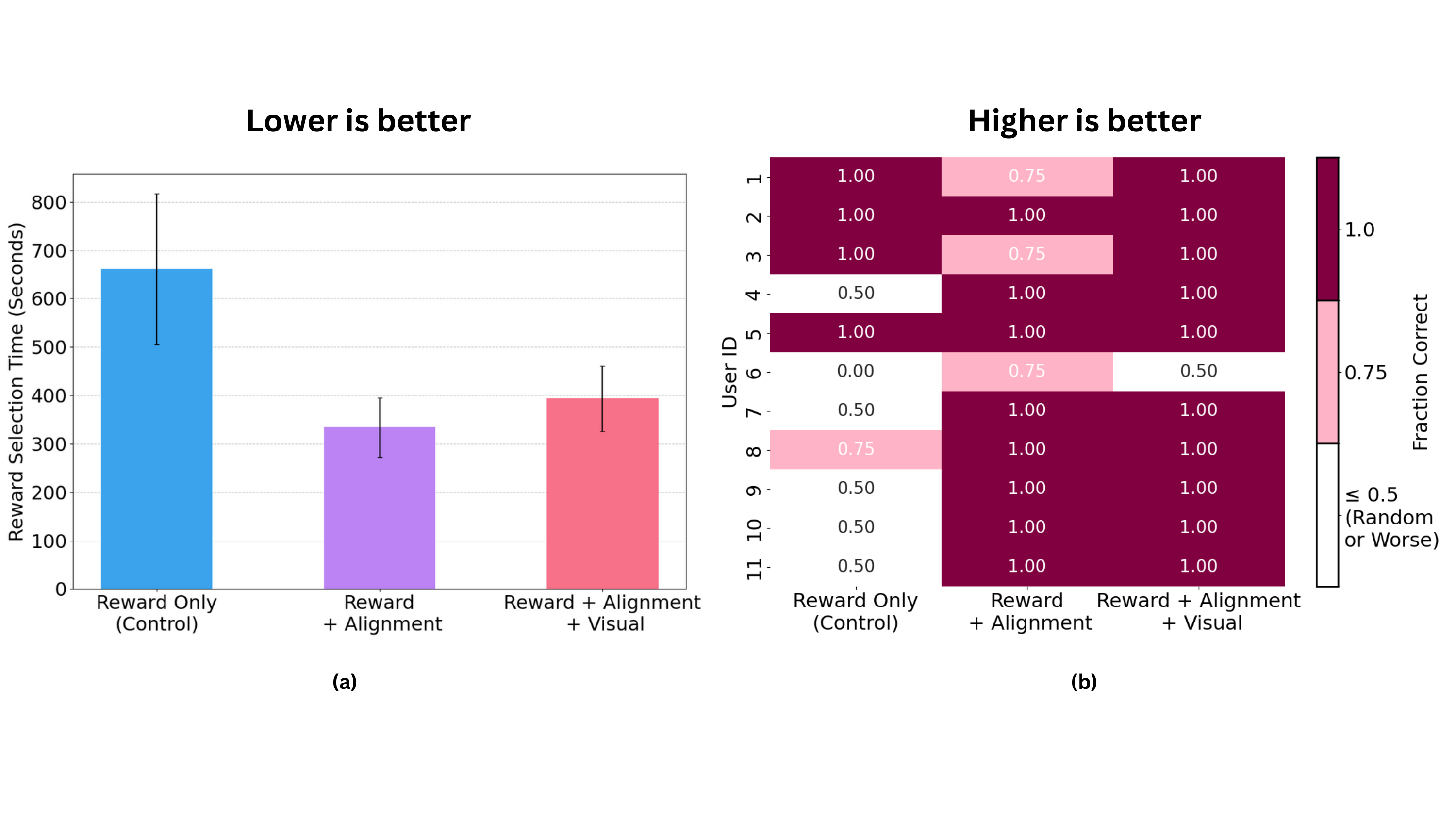}}
   \caption{Results showing (a) the mean completion time ($\pm$ standard error) for reward selection and (b) the proportion of policy-improving reward functions selected per user and condition.} 
  \label{fig:practical_impact_results}
\end{figure}

\paragraph{Qualitative Analysis}
In the open-ended questions, we asked participants to explain which condition they liked and disliked. Most participants favored the Reward + Alignment + Visual condition ($73$\%). A common theme among these participants was the emphasis on how the combination of the visualization and alignment score provided both intuitive insights into the reward function’s behavior and a scalar metric that simplified decision-making.
For example, \textbf{P5} stated ``It also let me see exactly which trajectories were aligned vs not, giving me better insights into what behavior the reward function was favoring.''
Furthermore, the Reward Only condition was least favored by $64$\% of participants. Two main complaints emerged: (1) difficulty interpreting the reward functions, requiring intuition or domain expertise, and (2) the process being slow and tedious, making reward comparisons time-consuming.
Specifically, \textbf{P2} wrote ``it is very hard to just look at the reward function and guess what will happen,'' while \textbf{P8} mentioned ``Even though all the information needed to make a decision could be deducted from the trajectories, going through all of them on a piece of paper could be very hard and time-consuming.''  
Lastly, two participants least preferred the Alignment + Reward condition. The common theme was that the alignment score alone was insufficient. Unsurprisingly, participants felt that, compared to the Reward + Alignment + Visual condition, it lacked detailed feedback and appeared too aggregated without supporting visuals, making decision-making more difficult. Overall, the open-ended responses further supported the quantitative evidence, indicating that the Trajectory Alignment Coefficient improved the user experience during reward selection.

\section{Conclusion}

The success of RL agents is inherently dependent on the quality of the  MDP's reward function, yet reward design is often treated as a secondary concern. 
In practice, however, it is a complex and error-prone process. These challenges are further amplified in real-world RL applications, where reward design is typically a collaborative effort between RL practitioners and domain experts. 
 In such settings, the RL practitioner must design a reward function that accurately reflects the domain expert’s preferences and constraints.
In this work, we address this challenge by introducing the Trajectory Alignment Coefficient, a reward alignment metric that quantifies the similarity between a human stakeholder's preference orderings over trajectory distributions and those induced by a reward function. Through an $11$--person user study, we demonstrate its effectiveness in supporting RL practitioners during reward selection. Specifically, participants in Trajectory Alignment-based conditions reported significantly lower cognitive workload and were more likely to select policy-improving reward functions. 
In future work, we plan to extend our metric to full reward design, where participants construct reward functions from scratch.

\subsubsection*{Broader Impact Statement}
\label{sec:broaderImpact}
In this work, we introduce a reward evaluation metric that measures the alignment between a human stakeholder's preferences over trajectory distributions and those induced by a reward function. However, if the preferences provided by the human stakeholder do not accurately reflect their true beliefs, the output of the metric may be unreliable and could mislead the reward design process. 
\subsubsection*{Acknowledgments}\label{sec:ack}
This work has taken place in part in the Intelligent Robot Learning Lab at the University of Alberta and the Rewarding Lab at the University of Texas at Austin. The Intelligent Robot Learning Lab is supported by research grants from Alberta Innovates; the Alberta Machine Intelligence Institute (Amii); a Canada CIFAR AI Chair, Amii; the Digital Research Alliance of Canada; Mitacs; and the Natural Sciences and Engineering Research Council (NSERC). The Rewarding Lab is supported by the National Science Foundation (NSF IIS-2402650), the Office of Naval Research (ONR N00014-22-1-2204), EA Ventures, Bosch, UT Austin’s Good Systems grand challenge, and Open Philanthropy.



\bibliography{main}
\bibliographystyle{rlj}

\beginSupplementaryMaterials
\appendix


\section{Trajectory Alignment Coefficient in Practice}\label{sec:tac_in_practice_appendix}
\subsection{How to sample trajectories?}
To sample the trajectories used for the trajectory alignment coefficient, we propose that practitioners select qualitatively different trajectories. We now outline the methodology used to obtain these trajectories.

To ensure qualitative diversity, we sampled trajectories from Q--learning agents that were only partially trained. 
Note that for this component, we only considered the environment configuration with a fixed start state of $(0,0)$ and where the food and water locations are $(3,0)$ and $(0,0)$, respectively. We did this intentionally, as the trajectory alignment coefficient only remains invariant to potential-based reward shaping if the start state is fixed.

We used the default hyperparameters from Table \ref{tab:Q_Learning_hyperparams} and the evaluation metric as the reward function for this training. We categorized partial training into three groups: \textbf{low-return}, \textbf{medium-return}, and \textbf{high-return}. After partial training, we performed offline evaluation (i.e., policy rollouts with no exploration). The low-return group contained trajectories with returns in $[1,30)$, the medium-return group had returns in $[30,60)$, and the high-return group had returns $\geq 60$. We then randomly sampled four trajectories per group, resulting in a total of $12$ trajectories used in the user study outlined in Section \ref{sec:experimental_design}.

The specific returns (per the evaluation metric) of the $12$ trajectories are as follows:
\[[1.0,6.0,9.0,29.0,43.0,56.0,56.0,66.0,68.0,74.0,90.0]\]
 Note that the optimal policy achieves an average return of $\approx 96.31$. We computed the optimal policy by performing value iteration over $13$ seeds.

\subsection{How many trajectories to samples?}\label{sec:traj_sample_size_study}

In Section \ref{sec:tac_in_practice_main}, we noted that in Hungry Thirsty, the Trajectory Alignment Coefficients computed from a small subset of $12$ trajectories were highly correlated with those computed from a larger set of $1200$ trajectories. To better understand this relationship, we now outline the methodology used to obtain this result.

More specifically, we used all $31$ reward functions from the open-sourced human reward data set from \cite{perils_reward_design}, along with their variants (e.g., $23$ added reward functions, linear transformations), resulting in a total of $54$ reward functions. We then sampled trajectory subsets of varying sizes (\(N \in \{10, 12, 25, 100, 500\}\)) using the sampling strategy described earlier.
Using the task evaluation metric as a proxy for the domain expert preferences, we calculated the Trajectory Alignment Coefficients ($\sigma_{TAC}$) between the domain expert’s preferences and those induced by the reward functions. 
To assess whether smaller trajectory subsets provide reliable \(\sigma_{TAC}\) estimates, we computed the correlation between the \(\sigma_{TAC}\) scores from each subset to those obtained using a larger trajectory set of $1200$.
We repeated this process $50$ times per trajectory subset size to account for variability and then averaged the resulting correlations. This is depicted in Table \ref{tab:correlation_results}.

A high correlation between the Trajectory Alignment Coefficients from smaller subsets and the $1200$--trajectory set would indicate that even with a limited number of trajectories, we can obtain $\sigma_{TAC}$ estimates that are consistent with those derived from a significantly larger sample. This finding suggests that in Hungry-Thirsty, a relatively small number of trajectories may be sufficient for accurately assessing $\sigma_{TAC}$, reducing the need for extensive trajectory ranking.

We used \textit{Kendall’s Tau} to measure correlation because (1) the normality assumption was violated, making Pearson’s \(r\) unsuitable, and (2) there were ties in the dataset, which makes Kendall’s Tau a better choice than \textit{Spearman’s Rho}, as it handles tied ranks more effectively.
\begin{table}[h!]
\centering
\resizebox{\textwidth}{!}{%
\begin{tabular}{l c c}  
\toprule
\textbf{\textsc{Subset Size}} & \textbf{\textsc{Average Correlation}} & \textbf{\textsc{Standard Deviation}} \\  
\midrule
\textsc{500} & \textsc{0.992} & \textsc{0.005} \\  
\textsc{100} & \textsc{0.979} & \textsc{0.009} \\  
\textsc{25} & \textsc{0.930} & \textsc{0.036} \\  
\textsc{12} & \textsc{0.828} & \textsc{0.105} \\  
\textsc{10} & \textsc{0.795} & \textsc{0.131} \\  
\bottomrule
\end{tabular}
}  
\captionsetup{width=\textwidth}
\caption{Average correlation (across $50$ samples) between the \(\sigma_{TAC}\) scores computed from each subset size and those obtained using a larger trajectory set of $1200$ (denoted in bold). The standard deviation across samples is reported in the third column.}
\label{tab:correlation_results}
\end{table}

\subsection{Q-Learning Hyperparameters}
An overview of the hyperparameters used for training the Q--Learning, SARSA, and Expected SARSA algorithms are provided in Table~\ref{tab:Q_Learning_hyperparams}. To evaluate the performance of the reward functions used in the reward selection aspect of the user study, we trained $18$ Q--Learning, SARSA and Expected SARSA agents by performing a full grid search over two hyperparameters: learning rate and epsilon. We systematically varied both across all combinations while keeping the remaining hyperparameters fixed.

\begin{table}[h!]
\centering
\resizebox{\textwidth}{!}{%
\begin{tabular}{l c}  
\toprule
\textbf{\textsc{Hyperparameter}} & \textbf{\textsc{Value}} \\  
\midrule
\textsc{Number of Training Episodes} & \textsc{10000} \\  
\textsc{Number of Seeds} & \textsc{10} \\  
\textsc{Learning Rate} & \textsc{[0.0001, 0.001, 0.01, 0.0005, 0.005, \textbf{0.05}]} \\  
\textsc{Exploration Strategy} & \textsc{Epsilon-Greedy} \\  
\textsc{Epsilon} & \textsc{[0.05, 0.10,\textbf{0.15}]} \\  
\textsc{Discount} & \textsc{0.99} \\  
\bottomrule
\end{tabular}
}  
\captionsetup{width=\textwidth}
\caption{Hyperparameters for all RL Algorithms. For hyperparameters with multiple options, the list represents possible values that were searched over. Bolded values indicate default settings. }
\label{tab:Q_Learning_hyperparams}
\end{table}

\section{User Study}\label{sec:ui}
We first show figures that correspond to the interface used in the human-subject study. 
\begin{figure}[H]
  \centering
  \begin{subfigure}{0.48\textwidth}
    \centering
    \includegraphics[scale=0.33, trim=2cm 1cm 2cm 1cm, clip]{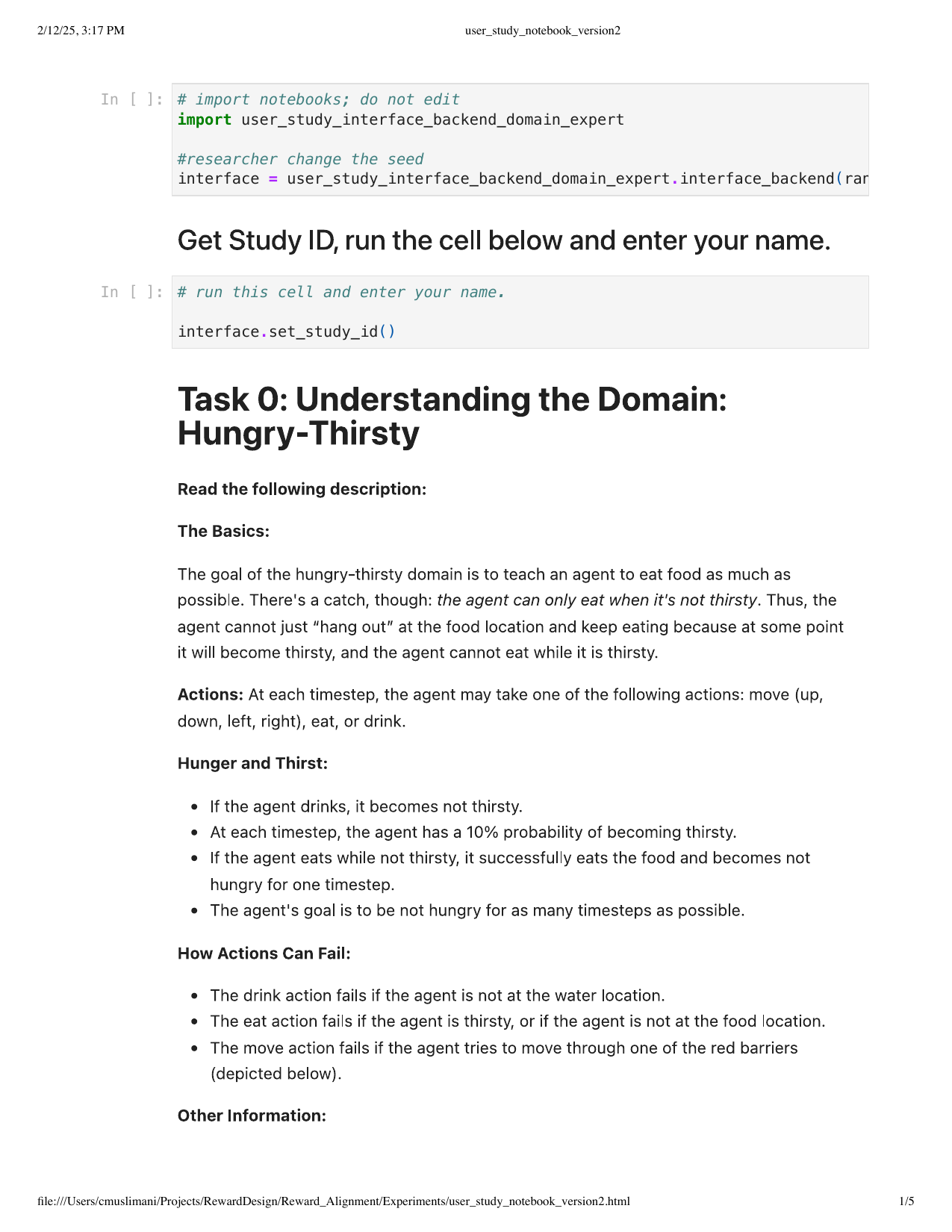}
    \caption{First page.}
    \label{fig:page1}
  \end{subfigure}
  \hfill
  \begin{subfigure}{0.48\textwidth}
    \centering
    \includegraphics[scale=0.33, trim=2cm 1cm 2cm 1cm, clip]{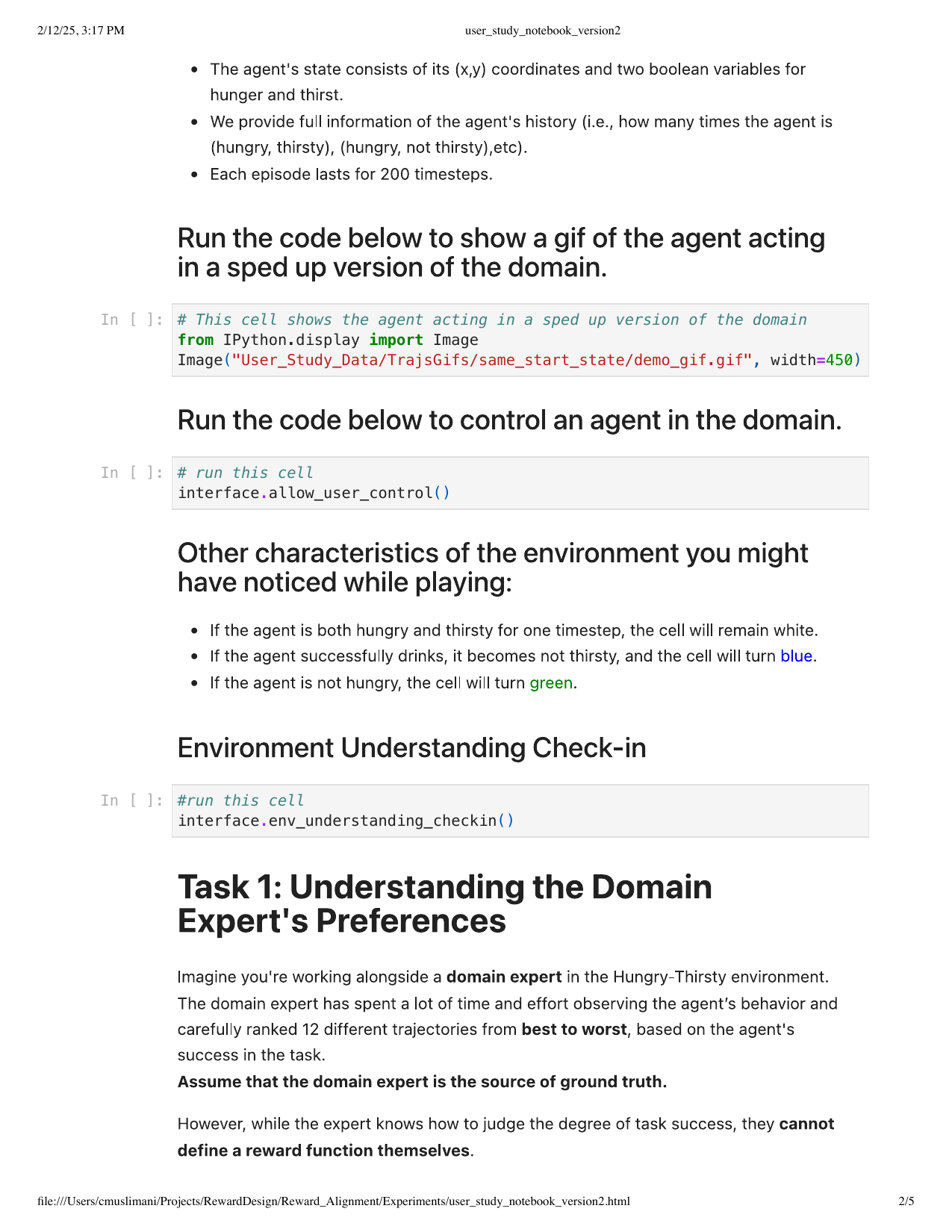}
    \caption{Second page.}
    \label{fig:page2}
  \end{subfigure}
  \caption{First and second pages of the UI in the human-subject study. }
  \label{fig:user_study_interface1}
\end{figure}

\begin{figure}[H]
  \centering
  \begin{subfigure}{0.48\textwidth}
    \centering
    \includegraphics[scale=0.4, trim=2cm 6cm 2cm 1cm, clip]{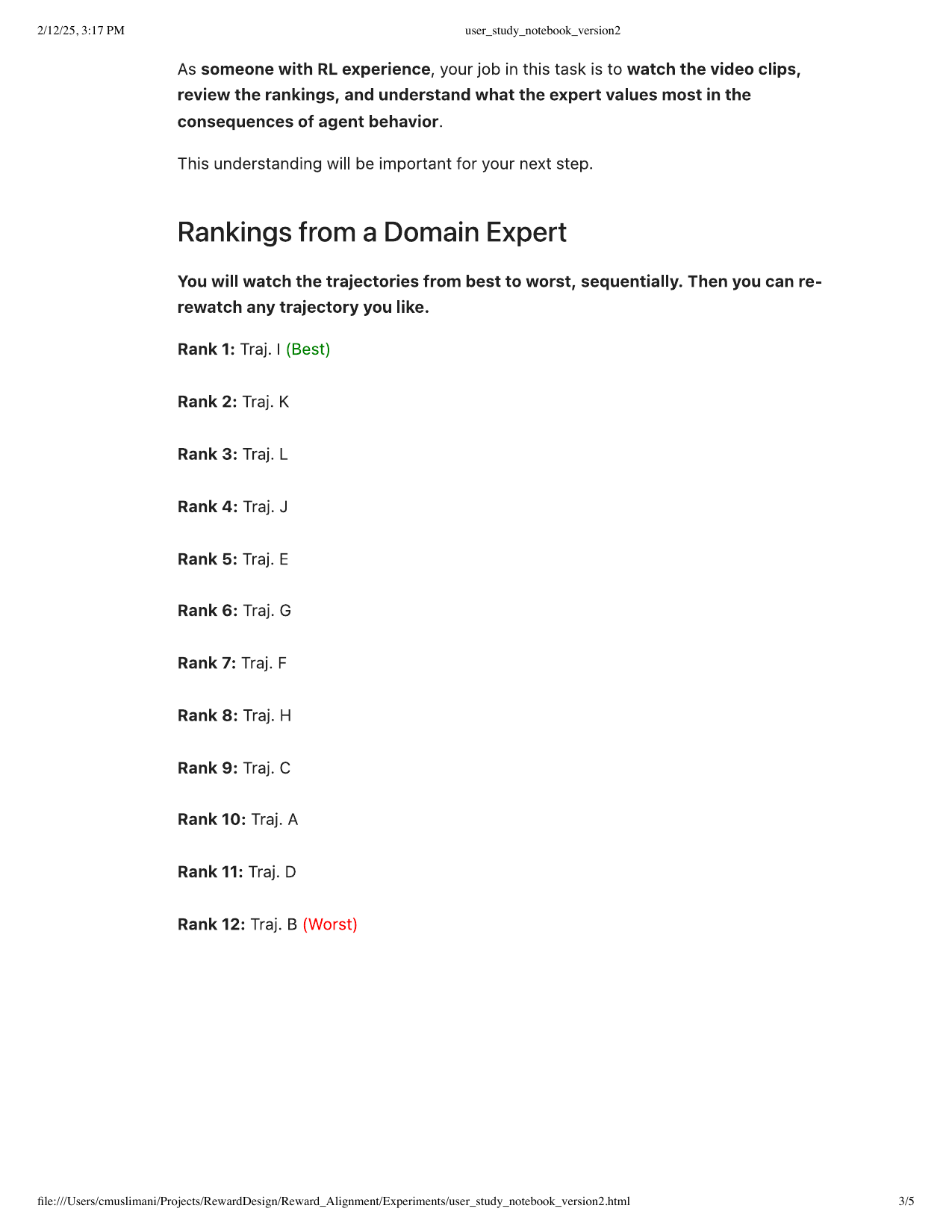}
    \caption{Third page.}
    \label{fig:page3}
  \end{subfigure}
  \hfill
  \begin{subfigure}{0.48\textwidth}
    \centering
    \includegraphics[scale=0.35,  trim=2cm 2cm 2cm 1cm, clip]{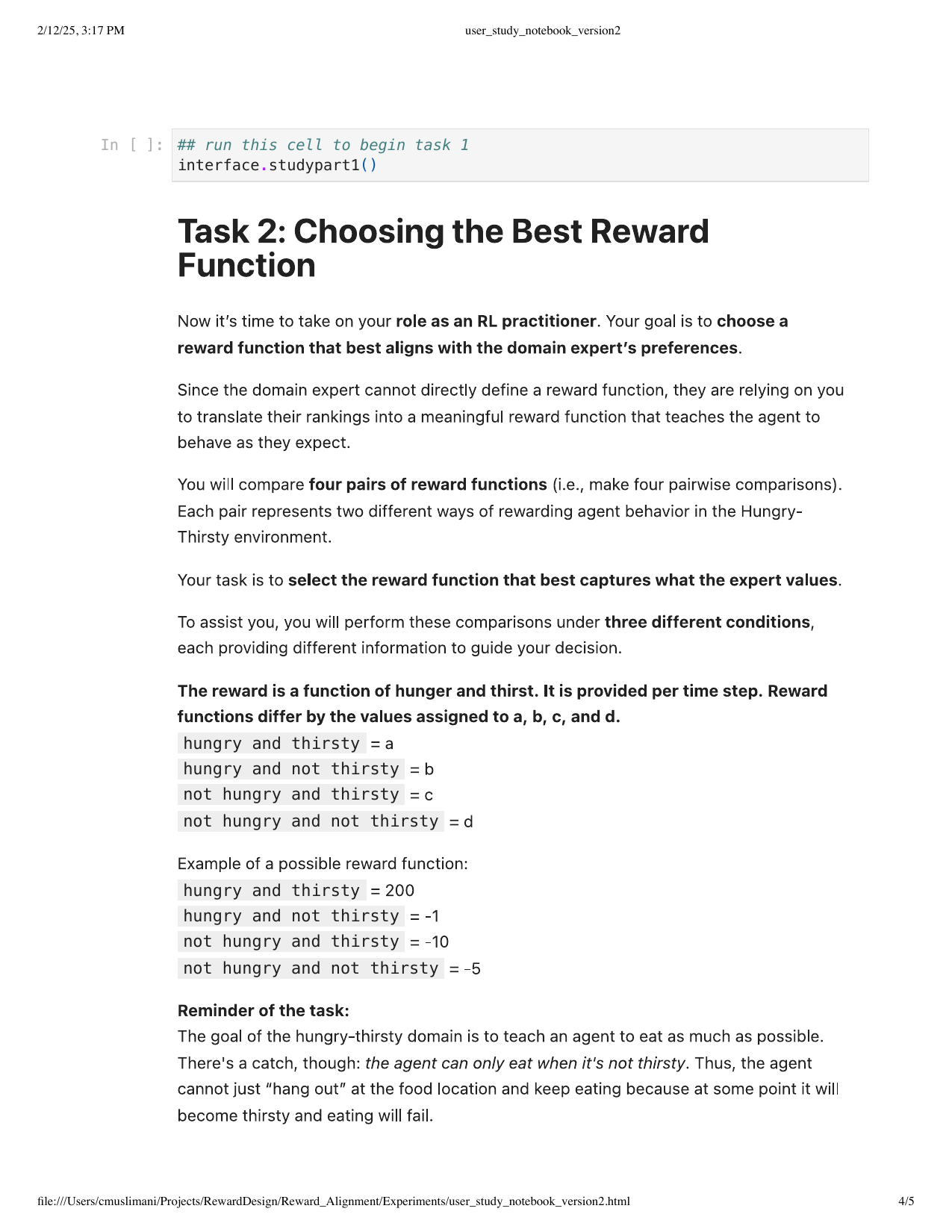}
    \caption{Fourth page.}
    \label{fig:page4}
  \end{subfigure}
  \caption{Third and fourth pages of the UI in the human-subject study. }
  \label{fig:user_study_interface2}
\end{figure}
\begin{figure}[H]
  \centering
\includegraphics[scale=0.35, trim = 0cm 5.5cm 0cm 1cm, clip]{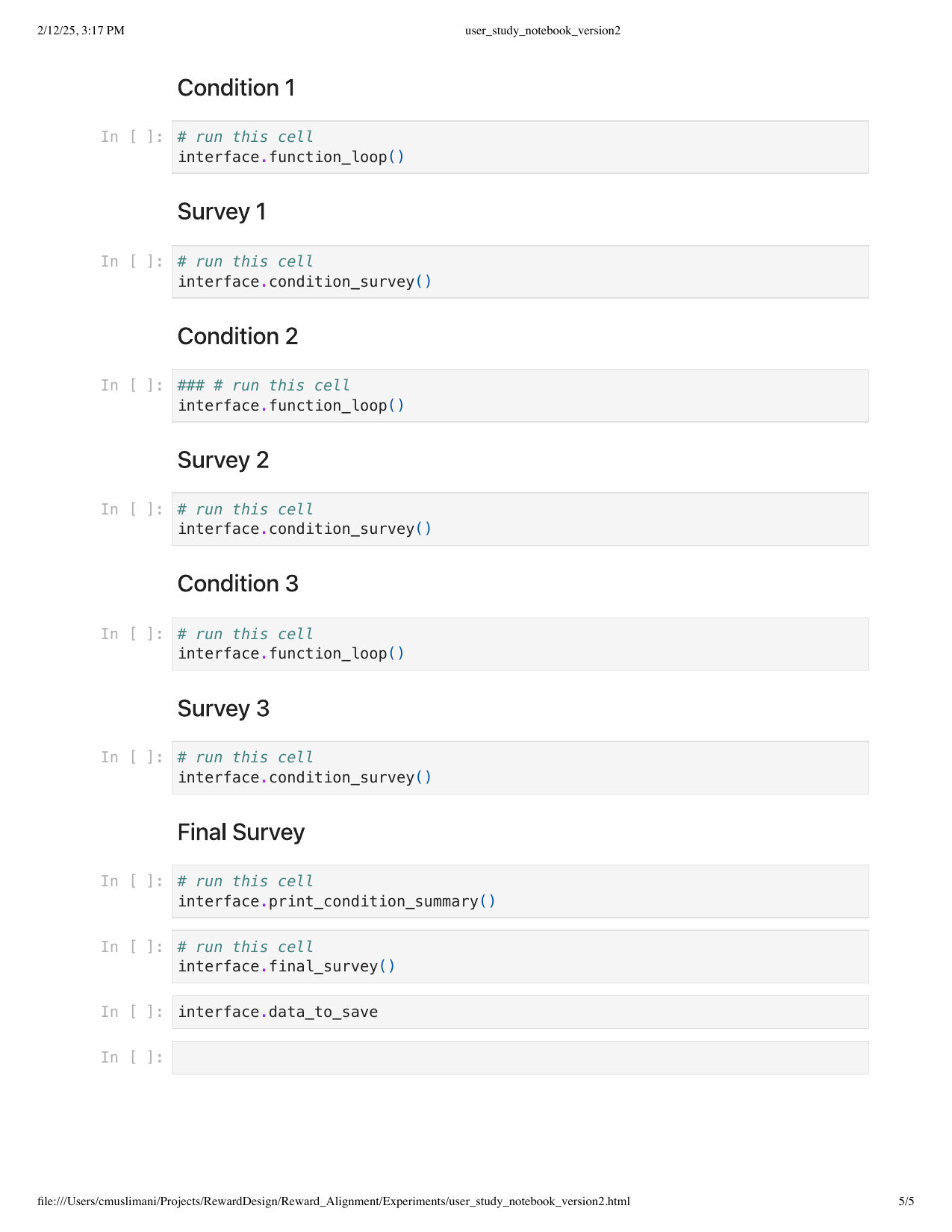}
  \caption{Fifth page of the UI in the human-subject study.}
  \label{fig:user_study_interface3}
\end{figure}

\begin{figure}[H]
  \centering
\includegraphics[scale=0.5, trim = 0cm 0cm 0cm 0cm, clip]{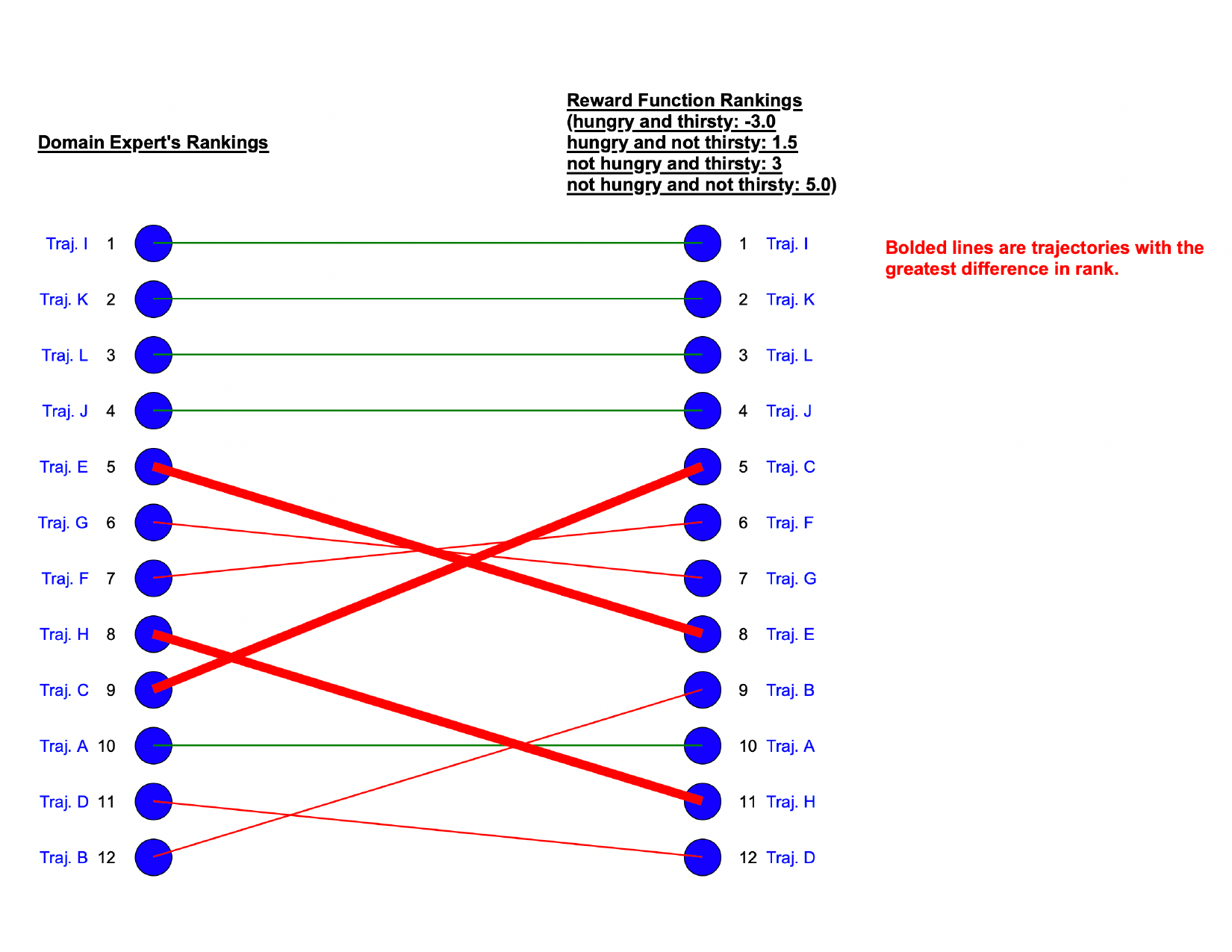}
  \caption{The visualization used in the Reward $+$ Alignment $+$ Visualization feedback condition. This shows how the preferences over trajectories differ between the domain expert and the reward function.}
  \label{fig:visualization}
\end{figure}

\begin{figure}[ht]
  \centering
  \begin{subfigure}{0.48\textwidth}
    \centering
    \includegraphics[scale=0.32, trim=0cm 0cm 2cm 0cm, clip]{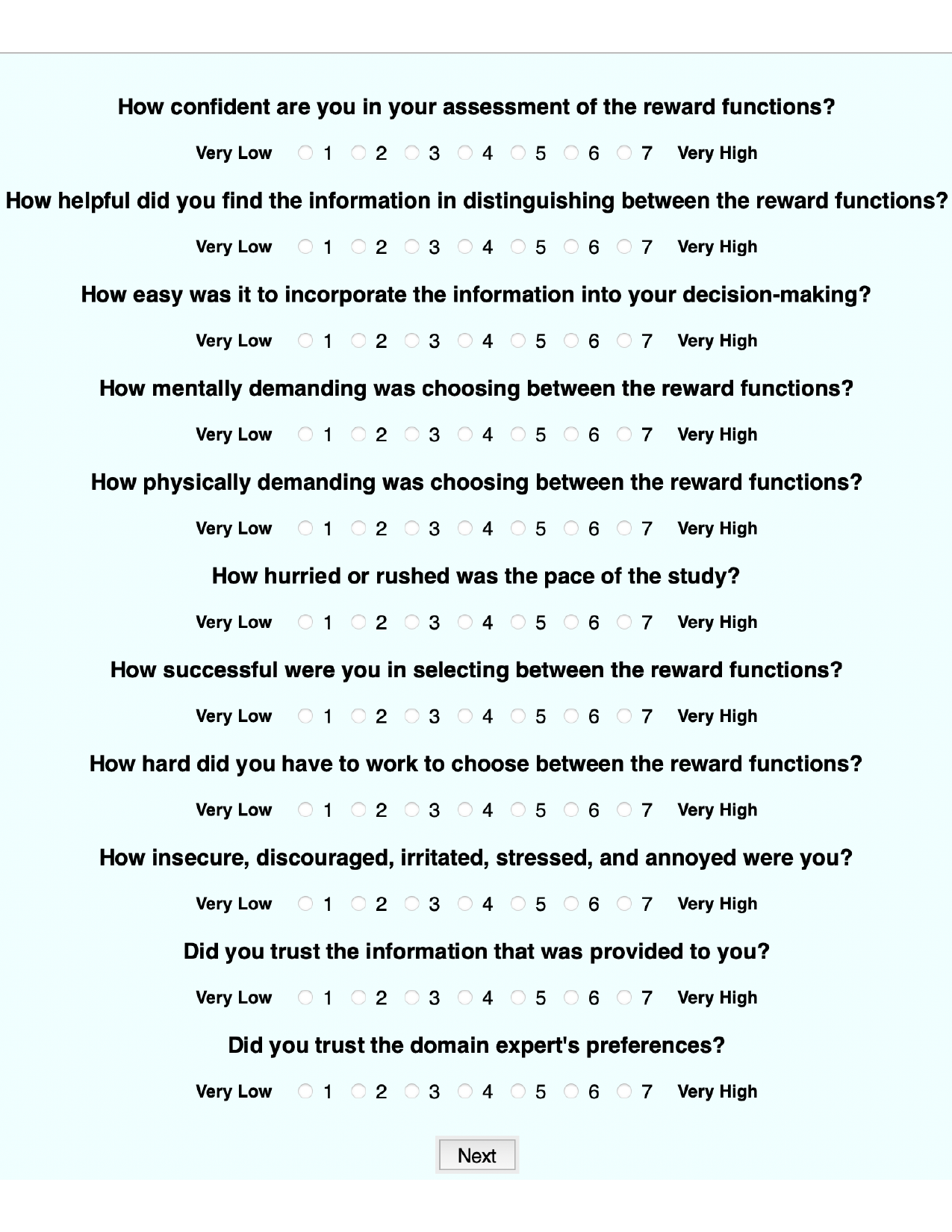}
    \caption{Modified NASA TLX survey given to participants after each condition.}
    \label{fig:nasa_survey}
  \end{subfigure}
  \hfill
  \begin{subfigure}{0.48\textwidth}
    \centering
    \includegraphics[scale=0.32,  trim=0cm 0cm 0cm 0cm, clip]{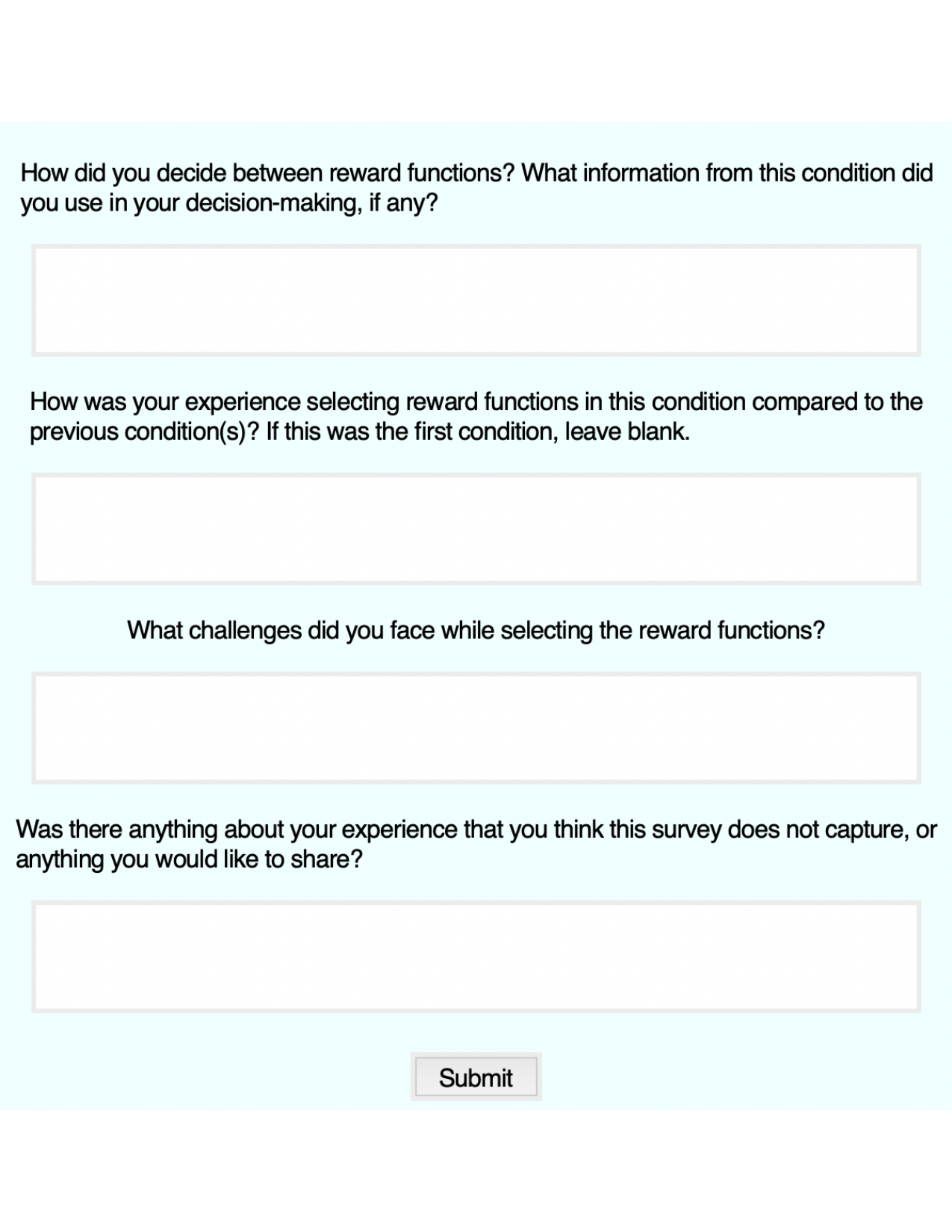}
    \caption{Short answer survey given to participants after each condition.}
    \label{fig:survey_shortanswer}
  \end{subfigure}
  \caption{Condition Experience Surveys.}
  \label{fig:condition_experiences_survey}
\end{figure}

\begin{figure}[H]
  \centering
  \begin{subfigure}{0.48\textwidth}
    \centering
    \includegraphics[scale=0.32, trim=0cm 0cm 2cm 0cm, clip]{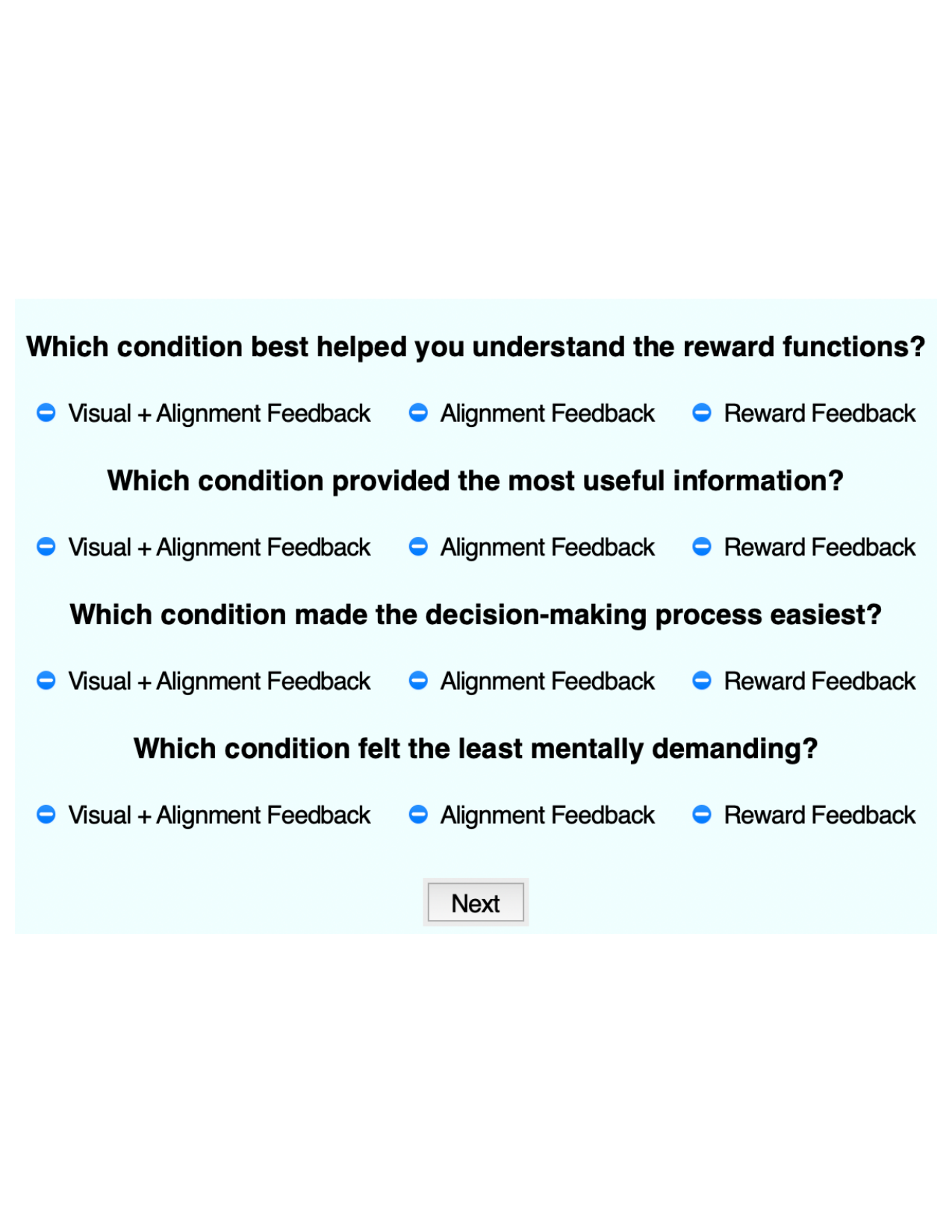}
    \caption{Final multiple choice survey given to participants to compare their experiences across the conditions. }
    \label{fig:comparison_survey_mc}
  \end{subfigure}
  \hfill
  \begin{subfigure}{0.48\textwidth}
    \centering
    \includegraphics[scale=0.32, trim=0cm 0cm 2cm 0cm, clip]{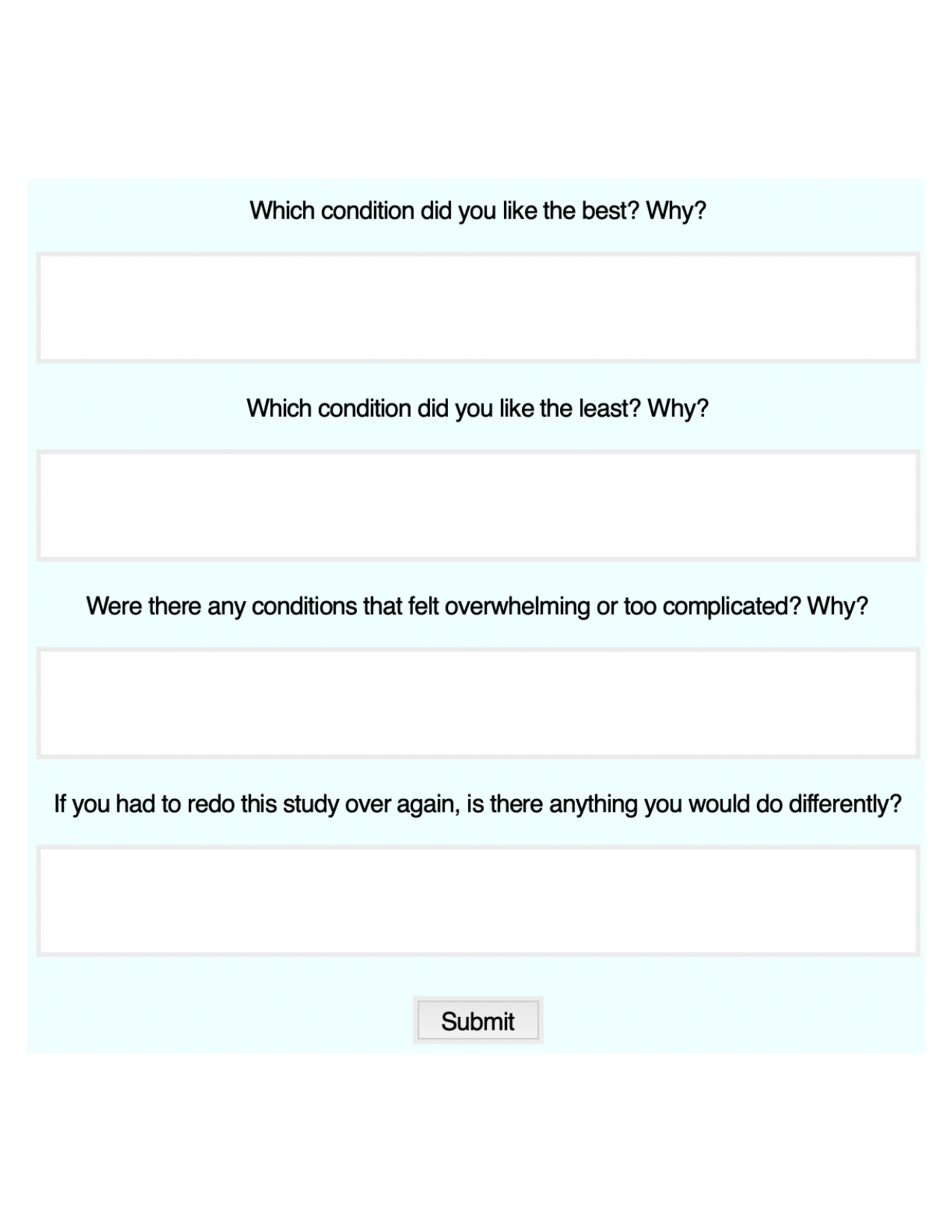}
    \caption{Final short answer survey given to participants to compare their experiences across the conditions.}
    \label{fig:comparison_shortanswer}
  \end{subfigure}
  \caption{Condition Comparison Surveys.}
  \label{fig:comparison_survey}
\end{figure}

Next, in Tables \ref{tab:final_return_comparison} and \ref{tab:auc_comparison}, we present the reward functions used in the human-subject study. In the study, the reward functions in the \textsc{Reward Function 1} column were compared with the corresponding reward functions in the \textsc{Reward Function 2} column. We report the mean final return and the area under the curve (AUC) (both calculated from the evaluation metric), along with the standard deviation (STD). Note that the reward functions in both tables are identical; the only distinction is the metric being displayed (e.g., return or AUC).
\begin{table}[H]
\centering
\resizebox{\textwidth}{!}{%
\begin{tabular}{|l|c|c|c|c|c|}
\hline
\textbf{\textsc{Reward Function 1}} & \textbf{\textsc{Final Return}} & \textbf{\textsc{Reward Function 2}} & \textbf{\textsc{Final Return}} \\ \hline
(-0.9, -0.7, -0.4, 1.1)         & 39.138 $\pm$ 38.619              & (-1, 0, 0.5, 1)                 & 30.832 $\pm$ 43.009              \\ \hline
(-3.7, 0.0, -3.1, 5.1)          & 33.495 $\pm$ 43.280              & (-3.0, 1.5, 3, 5.0)             & 1.079 $\pm$ 0.319                \\ \hline
(-0.9, -0.7, -0.4, 1.1)         & 39.138 $\pm$ 38.619              & (-0.05, 0.2, 1.0, 1.0)         & 1.608 $\pm$ 7.617                \\ \hline
(-3.6, 0.0, -3.1, 5.4)          & 35.485 $\pm$ 43.423              & (-5.8, 1.2, 3.6, 5.8)          & 1.555 $\pm$ 4.537                \\ \hline
(0, 0, 10, 10)                  & 49.709 $\pm$ 34.312              & (-0.05, 0.2, 1.0, 1.0)         & 1.608 $\pm$ 7.617                \\ \hline
(-5.0, 0, 3.25, 5.0)            & 31.582 $\pm$ 43.205              & (-5.0, 1.5, 3.25, 5.0)         & 1.225 $\pm$ 0.560                \\ \hline
(-0.5, -0.5, 10.0, 10.0)        & 51.755 $\pm$ 36.960              & (-0.05, 0.2, 1.0, 1.0)         & 1.608 $\pm$ 7.617                \\ \hline
(-0.4, -0.5, 0.0, 1.0)          & 44.338 $\pm$ 41.750              & (-0.2, 0.2, 0.5, 1.0)          & 1.217 $\pm$ 4.485                \\ \hline
(-5.0, 0.0, -2.5, 5.0)          & 29.738 $\pm$ 42.468              & (-5.0, 1.5, 3.25, 5.0)         & 1.225 $\pm$ 0.560                \\ \hline
(-1.0, -0.05, -0.25, 1.0)       & 37.488 $\pm$ 44.081              & (-5.0, 1.5, 3.25, 5.0)         & 1.225 $\pm$ 0.560                \\ \hline
(-3.75, 0.0, -3.0, 5.0)         & 33.127 $\pm$ 43.348              & (-5.0, 1.5, 3.25, 5.0)         & 1.225 $\pm$ 0.560                \\ \hline
(-1.0, -0.7, -0.5, 1.0)         & 35.772 $\pm$ 36.955              & (-0.05, 0.2, 1.0, 1.0)         & 1.608 $\pm$ 7.617                \\ \hline
\end{tabular}
}
\captionsetup{width=\textwidth}
\caption{This table shows the reward functions being compared in the reward selection aspect of the user study. We also report the mean final return $\pm$ STD.}
\label{tab:final_return_comparison}
\end{table}
\begin{table}[h!]
\centering
\resizebox{\textwidth}{!}{%
\begin{tabular}{|l|c|c|c|c|c|}
\hline
\textbf{\textsc{Reward Function 1}} & \textbf{\textsc{AUC}} & \textbf{\textsc{Reward Function 2}} & \textbf{\textsc{AUC}} \\ \hline
(-0.9, -0.7, -0.4, 1.1)         & 240041.894 $\pm$ 278901.770     & (-1, 0, 0.5, 1)                 & 203405.861 $\pm$ 293167.199 \\ \hline
(-3.7, 0.0, -3.1, 5.1)          & 219656.426 $\pm$ 296860.095     & (-3.0, 1.5, 3, 5.0)             & 10619.689 $\pm$ 1872.266   \\ \hline
(-0.9, -0.7, -0.4, 1.1)         & 240041.894 $\pm$ 278901.770     & (-0.05, 0.2, 1.0, 1.0)         & 11390.661 $\pm$ 17391.171   \\ \hline
(-3.6, 0.0, -3.1, 5.4)          & 227802.180 $\pm$ 296571.891     & (-5.8, 1.2, 3.6, 5.8)          & 13451.407 $\pm$ 15346.038   \\ \hline
(0, 0, 10, 10)                  & 320899.367 $\pm$ 251997.175     & (-0.05, 0.2, 1.0, 1.0)         & 11390.661 $\pm$ 17391.171   \\ \hline
(-5.0, 0, 3.25, 5.0)            & 206036.191 $\pm$ 295113.945     & (-5.0, 1.5, 3.25, 5.0)         & 11585.087 $\pm$ 3764.687    \\ \hline
(-0.5, -0.5, 10.0, 10.0)        & 349167.541 $\pm$ 281517.767     & (-0.05, 0.2, 1.0, 1.0)         & 11390.661 $\pm$ 17391.171   \\ \hline
(-0.4, -0.5, 0.0, 1.0)          & 284235.135 $\pm$ 304976.538     & (-0.2, 0.2, 0.5, 1.0)          & 10914.361 $\pm$ 15017.765   \\ \hline
(-5.0, 0.0, -2.5, 5.0)          & 192430.552 $\pm$ 284398.848     & (-5.0, 1.5, 3.25, 5.0)         & 11585.087 $\pm$ 3764.687    \\ \hline
(-1.0, -0.05, -0.25, 1.0)       & 254052.926 $\pm$ 304591.173     & (-5.0, 1.5, 3.25, 5.0)         & 11585.087 $\pm$ 3764.687    \\ \hline
(-3.75, 0.0, -3.0, 5.0)         & 216262.144 $\pm$ 295986.126     & (-5.0, 1.5, 3.25, 5.0)         & 11585.087 $\pm$ 3764.687    \\ \hline
(-1.0, -0.7, -0.5, 1.0)         & 217347.509 $\pm$ 256831.683     & (-0.05, 0.2, 1.0, 1.0)         & 11390.661 $\pm$ 17391.171   \\ \hline
\end{tabular}
}
\captionsetup{width=\textwidth}
\caption{This table shows the reward functions being compared in the reward selection aspect of the user study. We also report the mean AUC $\pm$ STD.}
\label{tab:auc_comparison}
\end{table}

In an earlier version of the user study, two participants had a slightly different UI design. The differences included some variations in the wording of the instructions. We also did not include a game-play session that allowed user control in the domain. Additionally, we did not ask participants whether they trusted the domain expert or the information being provided to them. There were also minor changes in the reward functions considered. Initially, we had a set of $13$ pairs of reward functions, from which we sampled $12$ per participant. However, for simplicity in data analysis, we decided to remove one pair. We also replaced one reward function pair with another, to make the reward functions being compared more distinct.

\section{Proofs}\label{sec:proofs_detailed}

\begin{lemma}\label{linear_transformations_expected_returns_preserves_preferences}
Given the infinite-horizon setting, if the expected returns under reward function \( r' \) are a positive linear transformation of the expected returns under reward function \( r \), with respect to all trajectory distributions, then the preference ordering over any two trajectory distributions \( \eta_i \) and \( \eta_j \) remains unchanged. Formally:
\[
\mathbb{E}_{\tau \sim \eta}[G_{r'}(\tau)] = \alpha \mathbb{E}_{\tau \sim \eta}[G_{r}(\tau)] + \beta \implies 
\Big(\eta_i \underset{(r, \gamma)}{\succsim} \eta_j \iff \eta_i \underset{(r', \gamma)}{\succsim} \eta_j\Big) \quad \forall \eta_i, \eta_j,
\]
where \( \alpha > 0 \) and \( \beta \) are constants and the expectations \( \mathbb{E}_{\tau \sim \eta}[G_{r}(\tau)] \) and \( \mathbb{E}_{\tau \sim \eta}[G_{r'}(\tau)] \) are taken over the same trajectory distributions.
\end{lemma}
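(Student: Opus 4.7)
The plan is to prove this directly by unfolding the definition of the preference ordering (Definition 2.3) on both sides and then using the sign-preserving property of positive affine transformations. The statement is essentially a one-line observation once the definition of $\succsim_{(r,\gamma)}$ is substituted, so the proof should be very short.

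First, I would fix arbitrary trajectory distributions $\eta_i, \eta_j$ and apply Definition 2.3 (equation \eqref{eq:stochastic_traj}) to rewrite $\eta_i \succsim_{(r,\gamma)} \eta_j$ as the scalar inequality $\mathbb{E}_{\tau \sim \eta_i}[G_r(\tau)] \geq \mathbb{E}_{\tau \sim \eta_j}[G_r(\tau)]$, and similarly express $\eta_i \succsim_{(r',\gamma)} \eta_j$ in terms of expected returns under $r'$. The goal then reduces to showing the equivalence of two scalar inequalities.

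Next, I would invoke the hypothesis $\mathbb{E}_{\tau \sim \eta}[G_{r'}(\tau)] = \alpha\,\mathbb{E}_{\tau \sim \eta}[G_r(\tau)] + \beta$ for both $\eta = \eta_i$ and $\eta = \eta_j$, and subtract to obtain
\[
\mathbb{E}_{\tau \sim \eta_i}[G_{r'}(\tau)] - \mathbb{E}_{\tau \sim \eta_j}[G_{r'}(\tau)] = \alpha\Big(\mathbb{E}_{\tau \sim \eta_i}[G_{r}(\tau)] - \mathbb{E}_{\tau \sim \eta_j}[G_{r}(\tau)]\Big).
\]
Because $\alpha > 0$, multiplication by $\alpha$ preserves the sign of the right-hand side, so the left-hand side is nonnegative if and only if the right-hand side is nonnegative. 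This yields the desired biconditional on the expected-return inequalities, which translates back to the preference equivalence via Definition 2.3.

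There is no real obstacle here: the additive shift $\beta$ cancels automatically upon subtraction, and the only substantive use of the hypothesis is that $\alpha > 0$ (so the direction of the inequality is maintained rather than reversed). Since $\eta_i$ and $\eta_j$ were arbitrary, the conclusion holds for all pairs, completing the proof. The main thing to be careful about is simply to state the argument for the weak inequality $\succsim$ rather than the strict one, so that ties are handled by the $\iff$ correctly.
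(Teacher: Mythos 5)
Your proposal is correct and follows essentially the same route as the paper's proof: subtract the affine-transformed expected returns so that $\beta$ cancels, and use $\alpha>0$ to preserve the sign of the difference. The only cosmetic difference is that the paper splits into a strict-preference case and an indifference case, whereas you handle the weak inequality (and hence ties) in one step, which is arguably slightly cleaner and also makes the biconditional direction explicit.
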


\begin{proof}
    Let \( \eta_i, \eta_j \) be arbitrary trajectory distributions. Without loss of generality, assume that \mbox{\( \eta_i \underset{(r, \gamma)}{\succsim} \eta_j \)}  
    From Definition \ref{def:preferences_trajectory_distribution}, this implies:  
    \[
    \eta_i \underset{(r, \gamma)}{\succsim} \eta_j \iff \mathbb{E}_{\tau \sim \eta_i}[G_r(\tau)] \geq \mathbb{E}_{\tau \sim \eta_j}[G_r(\tau)]
    \]
    Define the difference in expected returns under \( r \) as:
    \begin{equation}\label{eq:delta_g}
        \Delta_{i,j} \overline{G}(r) \doteq \mathbb{E}_{\tau \sim \eta_i}[G_r(\tau)] - \mathbb{E}_{\tau \sim \eta_j}[G_r(\tau)]
    \end{equation}
    
    Now, consider the transformation of the expected return under \( r' \):
    \[
    \mathbb{E}_{\tau \sim \eta}[G_{r'}(\tau)] = \alpha \mathbb{E}_{\tau \sim \eta}[G_r(\tau)] + \beta,
    \]
    where \( \alpha > 0 \) and \( \beta \) are constants.  
    Define the corresponding difference under \( r' \):
    \begin{equation}\label{eq:delta_g_prime}
        \Delta_{i,j} \overline{G}(r') = \mathbb{E}_{\tau \sim \eta_i}[G_{r'}(\tau)] - \mathbb{E}_{\tau \sim \eta_j}[G_{r'}(\tau)]
    \end{equation}
    
Substitute the expressions for $\mathbb{E}_{\tau \sim \eta}[G_{r'}(\tau)]$, we get:
    \[
    \Delta_{i,j} \overline{G}(r') = \Big( \alpha \mathbb{E}_{\tau \sim \eta_i}[G_r(\tau)] + \beta \Big) - \Big( \alpha \mathbb{E}_{\tau \sim \eta_j}[G_r(\tau)] + \beta \Big)
    \]
    Simplify the terms and notice that \( \beta \) cancels out. We obtain:
    \[
    \Delta_{i,j} \overline{G}(r') = \alpha \Delta_{i,j} \overline{G}(r)
    \]

    Now, consider the two cases:

   \textbf{Case 1}: \( \eta_i \underset{(r, \gamma)}{\succ} \eta_j \)
   
     This means \( \Delta_{i,j} \overline{G}(r) > 0 \). Since \( \alpha > 0 \), we have:  
     \[
     \Delta_{i,j} \overline{G}(r') = \alpha \Delta_{i,j} \overline{G}(r) > 0
     \]
     As $\alpha$ is a positive constant, we conclude: \( \eta_i \underset{(r', \gamma)}{\succ} \eta_j \).  

   \textbf{Case 2}: \( \eta_i \underset{(r, \gamma)}{\sim} \eta_j \)
   
     This means \( \Delta_{i,j} \overline{G}(r) = 0 \). Apply the transformation to obtain: 
     \[
     \Delta_{i,j} \overline{G}(r') = \alpha \cdot 0 = 0
     \]
     Thus, \( \eta_i \underset{(r', \gamma)}{\sim} \eta_j \)

    Since both cases preserve the preference ordering, we conclude:
    \[
    \eta_i \underset{(r, \gamma)}{\succsim} \eta_j \iff \eta_i \underset{(r', \gamma)}{\succsim} \eta_j
    \]
\end{proof}

    \begin{lemma}\label{Necessity_lemma}[Necessity]
   In the infinite horizon setting, if two trajectory distributions \( \eta_i \in H(\mu_i) \) and \( \eta_j \in H(\mu_j) \) have different start-state distributions (\(\mu_i \neq \mu_j\)), then there exists a potential function \( \Phi \) such that:
    \[
    \eta_i \underset{(r, \gamma)}{\succsim}  \eta_j \textit{ and } \eta_i \underset{(r', \gamma)}{\prec} \eta_j.
    \]
    \end{lemma}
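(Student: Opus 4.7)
The plan is to leverage the key identity established inside the sufficiency proof of Lemma~\ref{Sufficiency_lemma}: for any trajectory distribution $\eta$ with start-state distribution $\mu_\eta$,
\begin{equation*}
\mathbb{E}_{\tau \sim \eta}[G_{r'}(\tau)] = \mathbb{E}_{\tau \sim \eta}[G_r(\tau)] - \mathbb{E}_{s_0 \sim \mu_\eta}[\Phi(s_0)].
\end{equation*}
Applied separately to $\eta_i\in H(\mu_i)$ and $\eta_j\in H(\mu_j)$, the two constants subtracted differ because $\mu_i\neq\mu_j$, and this mismatch is the free dimension I will exploit to engineer a preference flip.

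First, I would pick any $\eta_i\in H(\mu_i)$ and $\eta_j\in H(\mu_j)$ and, without loss of generality (possibly swapping the roles of $i$ and $j$, which is allowed since only $\mu_i\neq\mu_j$ matters), arrange so that $D \doteq \mathbb{E}_{\tau\sim\eta_i}[G_r(\tau)] - \mathbb{E}_{\tau\sim\eta_j}[G_r(\tau)] \ge 0$, giving $\eta_i \underset{(r,\gamma)}{\succsim} \eta_j$. Subtracting the two instances of the identity above, the desired condition $\eta_i \underset{(r',\gamma)}{\prec} \eta_j$ reduces to the single real-valued inequality
\begin{equation*}
\mathbb{E}_{s_0\sim\mu_i}[\Phi(s_0)] - \mathbb{E}_{s_0\sim\mu_j}[\Phi(s_0)] > D.
\end{equation*}

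Since $\mu_i\neq\mu_j$, there exists a state $s^{\star}$ with $\mu_i(s^{\star})\neq\mu_j(s^{\star})$; after one more WLOG relabel assume $\mu_i(s^{\star}) > \mu_j(s^{\star})$. I would then take the ``spike'' potential $\Phi(s) = c$ for $s = s^{\star}$ and $\Phi(s) = 0$ otherwise, where $c>0$ is a scalar to be chosen. The left-hand side of the inequality collapses to $c\bigl(\mu_i(s^{\star})-\mu_j(s^{\star})\bigr)$, so any $c > D/\bigl(\mu_i(s^{\star})-\mu_j(s^{\star})\bigr)$ makes the inequality strict and hence gives the required reversal. By Definition~\ref{def:potenital_based_reward_shaping}, this $\Phi$ together with the given $\gamma$ defines a legitimate shaped reward $r'$, so we have exhibited the witness.

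The step I expect to take the most care is not conceptually deep but is where mistakes are easy: verifying that the sufficiency derivation genuinely only required a \emph{single} shared start-state distribution per $\eta$ to produce Equation~\eqref{eq:shaping_expected_return_with_phi}. Once that is confirmed, applying it once to $\eta_i$ under $\mu_i$ and once to $\eta_j$ under $\mu_j$ legitimately isolates $\mathbb{E}_{\mu_i}[\Phi]-\mathbb{E}_{\mu_j}[\Phi]$ as the only $\Phi$-dependent term in the return difference, after which the spike construction above is a one-parameter family of potential-based shapings that provably violates preference preservation---exactly the claim of Lemma~\ref{Necessity_lemma}.
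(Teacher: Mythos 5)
Your proposal follows essentially the same route as the paper's proof: you reuse the identity $\mathbb{E}_{\tau\sim\eta}[G_{r'}(\tau)] = \mathbb{E}_{\tau\sim\eta}[G_r(\tau)] - \mathbb{E}_{s_0\sim\mu}[\Phi(s_0)]$ from the sufficiency argument, reduce the preference flip to the scalar inequality $\mathbb{E}_{\mu_i}[\Phi]-\mathbb{E}_{\mu_j}[\Phi] > \Delta_{i,j}\overline{G}(r)$, and then exhibit an explicit $\Phi$; the paper does exactly this, except its witness is a constant spread over the whole set $\mathcal{S}_{\mu_i>\mu_j}$ normalized by $\int_{\mathcal{S}_{\mu_i>\mu_j}}(\mu_i-\mu_j)$, whereas you use a single-state spike. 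Two small points to tighten. First, your ``one more WLOG relabel'' to get $\mu_i(s^{\star})>\mu_j(s^{\star})$ is not available, since the roles of $i$ and $j$ were already fixed by the earlier WLOG ensuring $D\ge 0$; fortunately no relabel is needed, because $\mu_i\neq\mu_j$ and both normalize to one, so a state (indeed a set of positive $\mu_i-\mu_j$ mass) with $\mu_i>\mu_j$ always exists. Second, the single-point spike only produces a nonzero gap $c\bigl(\mu_i(s^{\star})-\mu_j(s^{\star})\bigr)$ when the initial distributions place point mass on $s^{\star}$; for atomless (continuous) $\mu_i,\mu_j$ it gives zero, which is precisely why the paper's construction puts the constant on the entire positive-difference set. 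Replacing your spike by a constant on $\mathcal{S}_{\mu_i>\mu_j}$ (or any set of positive $(\mu_i-\mu_j)$-measure) closes this and recovers the paper's argument verbatim.
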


\begin{proof}\label{Necessity_lemma_proof}
Let \( \eta_i \in H(\mu_i) \) and \( \eta_j \in H(\mu_j) \)
be arbitrary trajectory distributions that have different start-state distributions (\(\mu_i \neq \mu_j\)). Without loss of generality assume that \( \eta_i \underset{(r, \gamma)}{\succsim} \eta_j \).  From Definition \eqref{def:preferences_trajectory_distribution}, this implies that: 
\[\mathbb{E}_{\tau\sim\eta_i} [G_r(\tau)] \ge \mathbb{E}_{\tau\sim\eta_j} [G_r(\tau)]\] We now analyze how the expected return changes under the potential-shaped reward function \( r' \). 

From Equation \eqref{eq:shaping_expected_return_with_phi} we have
\begin{align}
\mathbb{E}_{\tau\sim \eta_{i}(\tau)}[G_{r'}] &= 
\mathbb{E}_{\tau\sim \eta_{i}(\tau)}[G_{r}(\tau)] - 
\mathbb{E}_{s_{0}\sim \mu_{i}} \left[\Phi(s_0) \right], \label{eq:expected_return_phii} \\
\mathbb{E}_{\tau\sim \eta_{j}(\tau)}[G_{r'}] &= 
\mathbb{E}_{\tau\sim \eta_{j}(\tau)}[G_{r}(\tau)] - 
\mathbb{E}_{s_{0}\sim \mu_{j}} \left[\Phi(s_0) \right]. \label{eq:expected_return_phij}
\end{align}
Next, we define $\Delta_{i,j} \overline{G}(r), \Delta_{i,j} \overline{G}(r')$:
\begin{equation}\label{eq:delta_g2}
\Delta_{i,j} \overline{G}(r) \doteq \mathbb{E}_{\tau \sim \eta_i}[G_r(\tau)] - \mathbb{E}_{\tau \sim \eta_j}[G_r(\tau)]
\end{equation}
\begin{equation}\label{eq:delta_g_prime2}
\Delta_{i,j} \overline{G}(r) \doteq \mathbb{E}_{\tau \sim \eta_i}[G_{r'}(\tau)] - \mathbb{E}_{\tau \sim \eta_j}[G_{r'}(\tau)]
\end{equation}
Substitute Equations \eqref{eq:expected_return_phii} and \eqref{eq:expected_return_phij} into Equation \eqref{eq:delta_g_prime2} to obtain $\Delta_{i,j} \overline{G}(r')$:
\begin{align*}
\Delta_{i,j} \overline{G}(r') 
&= \Bigl(
\mathbb{E}_{\tau\sim \eta_{i}(\tau)}[G_{r}(\tau)] -  \mathbb{E}_{s_{0}\sim \mu_{i}}\left[\Phi(s_0)\right]\Bigr) - \Bigl(
\mathbb{E}_{\tau\sim \eta_{j}(\tau)}[G_{r}(\tau)] - 
\mathbb{E}_{s_{0}\sim \mu_{j}}\left[\Phi(s_0)\right]
\Bigr)
\end{align*}

Rearrange terms and substitute in the equation for $\Delta_{i,j} \overline{G}(r)$, Equation \eqref{eq:delta_g2}:
\begin{align}
\Delta_{i,j} \overline{G}(r') &= \Bigl(\mathbb{E}_{\tau\sim \eta_{i}(\tau)}[G_{r}(\tau)] - \mathbb{E}_{\tau\sim \eta_{j}(\tau)}[G_{r}(\tau)]\Bigr) - \Bigl(\mathbb{E}_{s_{0}\sim \mu_i}\left[\Phi(s_0)\right]  - \mathbb{E}_{s_{0}\sim \mu_j}\left[\Phi(s_0)\right]\Bigr) \nonumber \\
&= \Delta_{i,j} \overline{G}(r) - \Bigl(\mathbb{E}_{s_{0}\sim \mu_i}\left[\Phi(s_0)\right]  - \mathbb{E}_{s_{0}\sim \mu_j}\left[\Phi(s_0)\right]\Bigr)\label{eq:delta_g_with_phi}
\end{align}
Now we show that there exists a potential-based shaping function that will invert the preference ordering over trajectory distributions $\eta_i, \eta_j$, that is $ \exists \Phi: \mathcal{S} \to \mathbb{R}$ such that \( \eta_i \underset{(r', \gamma)}{\prec} \eta_j\).

From Definition \ref{def:preferences_trajectory_distribution}, it follows that:
\begin{align}
 \eta_i \underset{(r', \gamma)}{\prec} \eta_j \iff \Delta_{i,j} \overline{G}(r') < 0 \nonumber
\end{align}
This provides the necessary condition for the existence of such a shaping function. Now let $\Delta_{i,j}\overline{\Phi}$ be defined as:
\begin{equation}\label{delta_phi}
\Delta_{i,j}\overline{\Phi} \doteq \mathbb{E}_{s_{0}\sim \mu_i}\left[\Phi(s_0)\right] - \mathbb{E}_{s_{0}\sim \mu_j}\left[\Phi(s_0)\right]
\end{equation}

Combine Equations \eqref{eq:delta_g_with_phi} and \eqref{delta_phi} to get:
\begin{equation}
    \Delta_{i,j} \overline{G}(r') = \Delta_{i,j} \overline{G}(r) - \Delta_{i,j}\overline{\Phi}  \nonumber
\end{equation}
where $\Delta_{i,j} \overline{G}(r) > 0$, since \( \eta_i \underset{(r, \gamma)}{\succsim} \eta_j \). Now it is clear that: \begin{equation}
    \Delta_{i,j} \overline{G}(r') < 0 \iff \Delta_{i,j}\overline{\Phi} > \Delta_{i,j} \overline{G}(r)  \nonumber
\end{equation}

We now provide an example of a potential-based shaping function for which $\Delta_{i,j}\overline{\Phi} > \Delta_{i,j} \overline{G}(r)$.
To begin, let us partition the state space $\mathcal{S}$ into two subsets: (1) the set of states that are more probable under $\mu_i$ and (2) those that are more probable under $\mu_j$, denoted as $\mathcal{S}_{\mu_i > \mu_j}$ and $\mathcal{S}_{\mu_i \leq \mu_j}$, respectively:
\begin{align}
    \mathcal{S}_{\mu_i > \mu_j} &= \{s|s\in\mathcal{S}, \mu_i(s)>\mu_j(s)\}\label{eq:partition_ui_greater_uj} \\
    \mathcal{S}_{\mu_i \leq \mu_j} &= \{s|s\in\mathcal{S}, \mu_i(s)\leq\mu_j(s)\}\label{eq:partition_ui_lesser_uj}
\end{align}
Next, we define the potential-based shaping function $\Phi: \mathcal{S} \to \mathbb{R}$ as a piecewise function, which takes the following form:
\begin{equation}\label{eq:peicewise_function}
    \Phi(s) =
\begin{cases} 
\frac{\Delta_{i,j} \overline{G}(r) + \epsilon}{\int_{s\in\mathcal{S}_{\mu_i>\mu_j}}\mu_i(s)-\mu_j(s)}, & \text{if } s\in\mathcal{S}_{\mu_i>\mu_j} \\
0 & \text{if } s\in\mathcal{S}_{\mu_i\leq\mu_j} 
\end{cases}
\end{equation}
where $\epsilon\in\mathbb{R}, \epsilon > 0$. For this shaping function, we define the difference in expected values as:
\begin{align*}
    \Delta_{i,j}\overline{\Phi} &\doteq \mathbb{E}_{s_{0}\sim \mu_i}\left[\Phi(s_0)\right] - \mathbb{E}_{s_{0}\sim \mu_j}\left[\Phi(s_0)\right] 
\end{align*}
We can express the expectation in integral form and rearrange the terms:
\begin{align*}
    &= \int_{s\in\mathcal{S}}\mu_i(s)\Phi(s) - \int_{s\in\mathcal{S}}\mu_j(s)\Phi(s) \\
    &= \int_{s\in\mathcal{S}}\bigl(\mu_i(s) - \mu_j(s)\bigr)\Phi(s) 
\end{align*}
Decompose the integral over the two partitions defined in Equations \eqref{eq:partition_ui_greater_uj} and \eqref{eq:partition_ui_lesser_uj}. Notice that $ \int_{s\in\mathcal{S}_{\mu_i \leq \mu_j}}\bigl(\mu_i(s) - \mu_j(s)\bigr)\Phi(s)$ goes to $0$ by Equation \eqref{eq:peicewise_function}:
\begin{align*}
    &= \int_{s\in\mathcal{S}_{\mu_i > \mu_j}}\bigl(\mu_i(s) - \mu_j(s)\bigr)\Phi(s) + \int_{s\in\mathcal{S}_{\mu_i \leq \mu_j}}\bigl(\mu_i(s) - \mu_j(s)\bigr)\Phi(s) \\
    &= \int_{s\in\mathcal{S}_{\mu_i > \mu_j}}\bigl(\mu_i(s) - \mu_j(s)\bigr)\Phi(s) 
\end{align*}
Move $\Phi(s)$ outside of the integral as it is a constant by Equation \eqref{eq:peicewise_function} and simplify:
\begin{align*}
    &= \frac{\Delta_{i,j} \overline{G}(r) + \epsilon}{\int_{s\in\mathcal{S}_{\mu_i>\mu_j}}\mu_i(s)-\mu_j(s)} \int_{s\in\mathcal{S}_{\mu_i > \mu_j}}\mu_i(s) - \mu_j(s) \\
    &= \Delta_{i,j} \overline{G}(r) + \epsilon
\end{align*}
Hence $\Delta_{i,j}\overline{\Phi} = \Delta_{i,j} \overline{G}(r) + \epsilon > \Delta_{i,j} \overline{G}(r)$.
\end{proof}

\begin{theorem}\label{invariant_pos_linear_transformation}
 Given the infinite-horizon setting, the Trajectory Alignment Coefficient is invariant to positive linear transformations. 
\end{theorem}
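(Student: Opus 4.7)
The plan is to reduce the claim to Lemma \ref{linear_transformations_expected_returns_preserves_preferences_main} by showing that a positive linear transformation of the reward induces a positive linear transformation of the expected return, and then to conclude via Definition \ref{tac_invariance}.

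First I would fix an arbitrary positive linear transformation $r'(s,a,s') = \alpha\, r(s,a,s') + \beta$ with $\alpha > 0$ and $\beta \in \mathbb{R}$. Substituting into the definition of the discounted return gives
\[
G_{r'}(\tau) \;=\; \sum_{t=0}^{\infty} \gamma^{t}\bigl(\alpha\, r(s_t,a_t,s_{t+1}) + \beta\bigr) \;=\; \alpha\, G_{r}(\tau) + \frac{\beta}{1-\gamma},
\]
where the geometric series $\sum_{t=0}^{\infty} \gamma^{t} = 1/(1-\gamma)$ is finite because $\gamma \in [0,1)$. Taking expectations under an arbitrary trajectory distribution $\eta$ and using linearity of expectation yields
\[
\mathbb{E}_{\tau \sim \eta}[G_{r'}(\tau)] \;=\; \alpha\, \mathbb{E}_{\tau \sim \eta}[G_{r}(\tau)] + \frac{\beta}{1-\gamma},
\]
which is exactly a positive linear transformation of the expected return with slope $\alpha > 0$ and intercept $\beta/(1-\gamma)$.

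Next I would invoke Lemma \ref{linear_transformations_expected_returns_preserves_preferences_main} to conclude that for every pair of trajectory distributions $\eta_i, \eta_j$,
\[
\eta_i \underset{(r,\gamma)}{\succsim} \eta_j \iff \eta_i \underset{(r',\gamma)}{\succsim} \eta_j.
\]
Applying this equivalence to every unordered pair $\{\eta_i, \eta_j\} \in \upsilon_{h}(D_{h})$, the induced preference data sets coincide: $D_{r,\gamma}(\upsilon_{h}, r, \gamma) = D_{r',\gamma}(\upsilon_{h}, r', \gamma)$. Since $\sigma_{\textit{TAC}}$ depends on the reward only through this induced data set, the equality $\sigma_{\textit{TAC}}(D_{h}, D_{r,\gamma}) = \sigma_{\textit{TAC}}(D_{h}, D_{r',\gamma})$ is immediate, which is exactly the condition required by Definition \ref{tac_invariance}.

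I do not expect a real obstacle here. Unlike the potential-based shaping case, no condition on matching initial-state distributions is needed, because the transformation acts uniformly on every trajectory and the additive constant cancels in every pairwise comparison. The only subtle step is the use of $\gamma \in [0,1)$ to ensure that the geometric series converges so that $\beta/(1-\gamma)$ is well defined; this is built into the MDP definition and so poses no difficulty.
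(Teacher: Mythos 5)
Your proposal is correct and follows essentially the same route as the paper's proof: substitute $r' = \alpha r + \beta$ into the (expected) return, observe that the additive term sums to $\beta/(1-\gamma)$ so the expected return undergoes a positive linear transformation, then invoke Lemma \ref{linear_transformations_expected_returns_preserves_preferences_main} and Definition \ref{tac_invariance} to conclude equality of the Trajectory Alignment Coefficients. The only cosmetic difference is that you perform the algebra at the level of individual trajectory returns before taking expectations, whereas the paper manipulates the expectations directly; the substance is identical.
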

\begin{proof}\label{invariant_pos_linear_transformation_proof}
    Let $\{\eta_i, \eta_j\} \in \upsilon_h(D_h)$ be an arbitrary pair of trajectory distributions compared in the human preference dataset.  Without loss of generality assume that \( \eta_i^r \succsim \eta_j^r \). From Defintion \eqref{def:preferences_trajectory_distribution}, this implies that
\[
\mathbb{E}_{\tau \sim \eta_i}[G_r(\tau)] \ge \mathbb{E}_{\tau \sim \eta_j}[G_r(\tau)].
\]

We now analyze how the expected return changes under the reward function \( r' \). 
The expected return under the reward function \( r \) is:
\begin{align}
\mathbb{E}_{\tau \sim \eta}[G_r(\tau)] &= \mathbb{E}_{\tau \sim \eta} \left[ \sum_{t=0}^\infty \gamma^t r(s_t, a_t, s_{t+1}) \right]
\end{align}
and the expected return under the shaped reward function \( r' \) is:
\begin{align}
\mathbb{E}_{\tau \sim \eta}[G_{r'}(\tau)] &= \mathbb{E}_{\tau \sim \eta} \left[ \sum_{t=0}^\infty \gamma^t r'(s_t, a_t, s_{t+1}) \right]\label{eq:expected_return_r_prime1}
\end{align}

Substitute \( r'(s, a, s') = \alpha \cdot r(s, a, s') + \beta \) into Equation \eqref{eq:expected_return_r_prime1}, we obtain:
\begin{align*}
\mathbb{E}_{\tau \sim \eta}[G_{r'}(\tau)] &= \mathbb{E}_{\tau \sim \eta} \left[ \sum_{t=0}^{\infty} \gamma^t (\alpha \cdot r(s_t, a_t, s_{t+1}) + \beta) \right] \\
&= \alpha \cdot \mathbb{E}_{\tau \sim \eta} \left[ \sum_{t=0}^{\infty} \gamma^t r(s_t, a_t, s_{t+1}) \right] + \mathbb{E}_{\tau \sim \eta} \left[ \sum_{t=0}^{\infty} \gamma^t \beta \right]
\end{align*}

Since \( \beta \) is a constant, the expectation simplifies as follows:
\begin{align*}
\mathbb{E}_{\tau \sim \eta}[G_{r'}(\tau)] &= \alpha \cdot \mathbb{E}_{\tau \sim \eta} [G_r(\tau)] + \beta \sum_{t=0}^{\infty} \gamma^t \\
&= \alpha \cdot \mathbb{E}_{\tau \sim \eta} [G_r(\tau)] + \frac{\beta}{1 - \gamma}
\end{align*}

As $\mathbb{E}_{\tau \sim \eta}[G_{r'}(\tau)]$ is positive linear transformation of $\mathbb{E}_{\tau \sim \eta}[G_{r}(\tau)]$, we apply Lemma \eqref{linear_transformations_expected_returns_preserves_preferences} to get:
\[\big(\eta_i \underset{(r, \gamma)}\succsim \eta_j \iff \eta_i \underset{(r', \gamma)}\succsim \eta_j\big) \]
Thus, from Equation \eqref{eq:tac} and Definition \eqref{tac_invariance}, we conclude that: \[\sigma_{\text{TAC}}(D_h, D_{r, \gamma}) = \sigma_{\text{TAC}}(D_h, D_{r', \gamma})\] 

\end{proof}

\section{Environment Details}\label{sec:env_details}
We use a modified Hungry-Thirsty domain \citep{singh2009rewards}, see Figure \ref{fig:hungry_thirsty_pic}. The agent's start state is randomly placed at the beginning of each episode, while the food and water locations are randomly assigned per environment configuration (e.g., per run/seed). 
Lastly, reward functions take the form:
\begin{equation}
\begin{split}
    r(\text{hungry, thirsty}) = a \\
    r(\text{hungry, not thirsty}) = b \\
    r(\text{not hungry, thirsty}) = c \\
    r(\text{not hungry, not thirsty}) = d\\ \nonumber
    \end{split}
\end{equation}

\begin{figure}[H]
  \centering
 \includegraphics[width=0.4\textwidth]{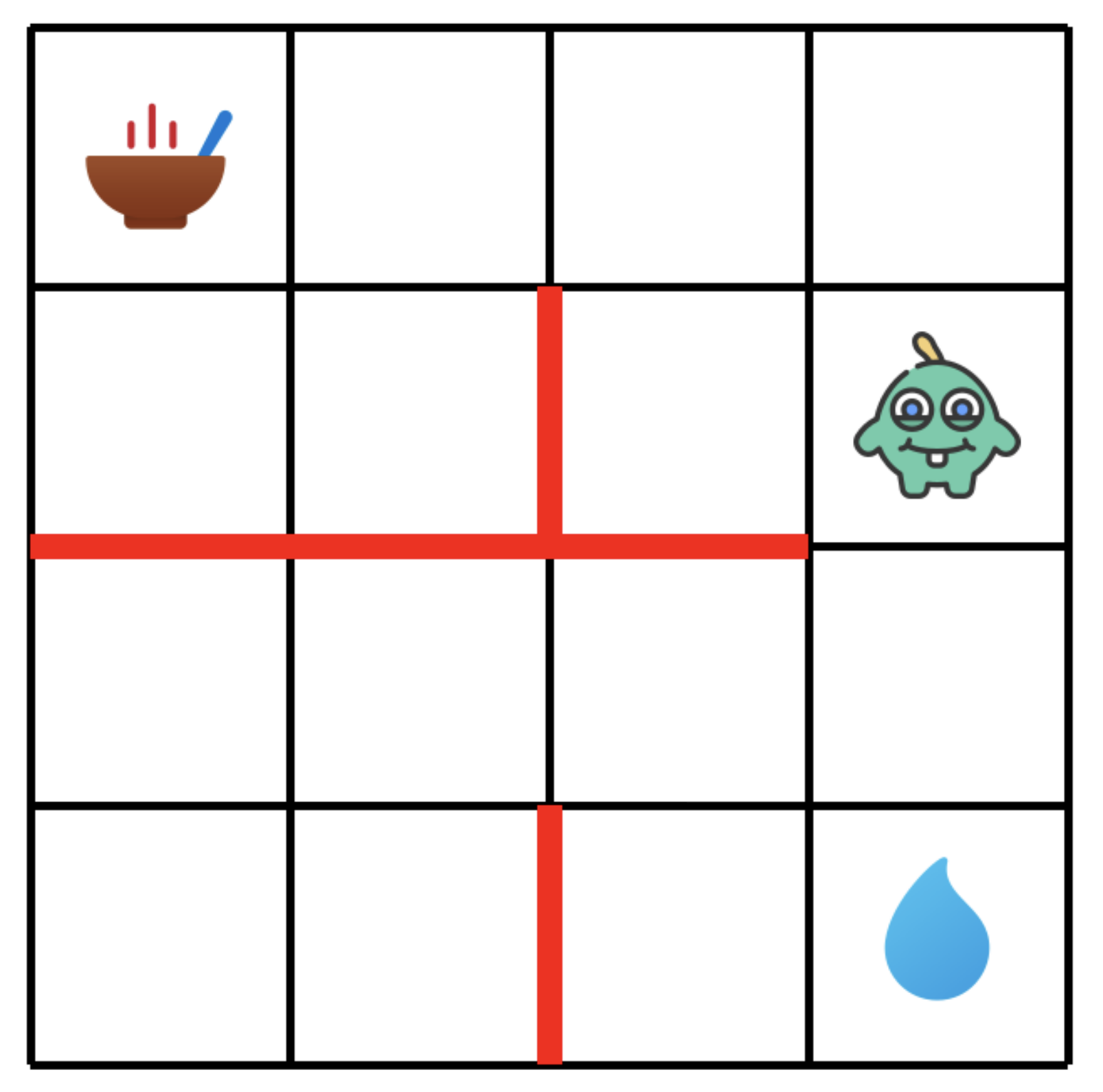}
  \caption{Hungry-Thirsty Environment}\label{fig:hungry_thirsty_pic}
\end{figure}

\end{document}